\documentclass[letterpaper, 10pt, conference]{ieeeconf} 
\IEEEoverridecommandlockouts
\usepackage{graphicx}
\usepackage{xcolor}
\usepackage{amsmath}
\usepackage{mathtools}
 \mathtoolsset{showonlyrefs}
\usepackage{tikz}
\usepackage{verbatim}
\usepackage{subcaption}
\usepackage{pgfplots}
\pgfplotsset{compat=1.10}
\usepackage{epstopdf} 
\usepackage{relsize}
\usepackage{booktabs}
\definecolor{Gray}{gray}{0.90}
\usepackage{colortbl}

\usepackage{algorithm}
\usepackage{algpseudocode}
\usepackage{color}
\usepackage{blkarray}
\usepackage{sidecap}
\usepackage{multirow}

\newtheorem{theorem}{\bf{Theorem}}[section]

\newtheorem{prop}[theorem]{\bf{Proposition}}

\newtheorem{lemma}[theorem]{\bf{Lemma}}

\newenvironment{definition}[1][Definition]{\begin{trivlist}
\item[\hskip \labelsep {\bfseries #1}]}{\end{trivlist}}

\usepackage{amssymb}
\usepackage{url}

\definecolor{amber}{rgb}{1.0, 0.75, 0.0}
\newcommand\figref{Fig.~\ref}

\usepackage{hyperref}
\usepackage{resizegather}

\usepackage{makecell}
\usepackage{censor}

\allowdisplaybreaks
\usepackage{stfloats}

\usepackage{bm}
\usepackage{amsmath}
\let\labelindent\relax
\usepackage{enumitem}

\DeclareMathOperator{\atan2}{atan2}

\providecommand{\secref}[1]{Sec.~\ref{#1}}
\providecommand{\appref}[1]{Appendix~\ref{#1}}
\providecommand{\figref}[1]{Fig.~\ref{#1}}
\providecommand{\tabref}[1]{Table~\ref{#1}}

\providecommand{\propref}[1]{Proposition~\ref{#1}}
\providecommand{\thmref}[1]{Theorem~\ref{#1}}
\providecommand{\lemref}[1]{Lemma~\ref{#1}}

\begin{document}{

\title{ \bf SC3BF: Shifted Collision Cone Control Barrier Function for Dynamic Obstacle Avoidance }
\author{ Amin Kashiri and Yasin Yaz{\i}c{\i}o\u{g}lu
\thanks{This work has been submitted to the IEEE for possible publication.
Copyright may be transferred without notice, after which this version may
no longer be accessible.}
\thanks{A. Kashiri is with the Department of Electrical and Computer Engineering at Northeastern University, Boston, MA ({\tt kashiri.a@northeastern.edu}).}%
\thanks{Y. Yaz{\i}c{\i}o\u{g}lu is with the Departments of Mechanical and Industrial Engineering and Electrical and Computer Engineering at Northeastern University, Boston, MA  ({\tt y.yazicioglu@northeastern.edu}).}%
}
}

\maketitle
\overrideIEEEmargins   
\begin{abstract}
The collision cone used by velocity-space control barrier functions is conservative: it rejects every relative velocity aimed into an obstacle, however slow. We propose the \emph{shifted collision-cone CBF} (SC3BF), which adds a state-dependent \emph{allowance} to the cone condition, so the robot may approach the obstacle at a rate that grows with distance and with its own speed. SC3BF is enforced by an ordinary quadratic program, and its safe set is forward invariant under bounded inputs without a minimum forward speed or a clearance margin. We prove that a nonzero allowance preserving safety always exists, and derive one in closed form. Against three velocity-space baselines on a kinematic bicycle among up to $100$ moving obstacles, SC3BF reaches the goal more often and modifies the nominal input less than half as much.
\end{abstract}

\section{Introduction}
\label{intro}
 
A vehicle navigating among moving obstacles must avoid collisions while progressing toward its goal, subject to kinematic constraints and input limits. A common approach combines a nominal controller with a safety filter that modifies its input as little as possible. Control Barrier Functions (CBFs) \cite{ames2017control}  represent the safe set as the region where a scalar function of the state is nonnegative, and constrain how fast that function can decrease. For dynamics that are affine in the input, this is a linear inequality on the input, which can be enforced with a small quadratic program (QP).
 
The QP is cheap to solve, but finding a suitable barrier function is difficult. For an acceleration-controlled vehicle, the distance derivative does not contain acceleration, so a distance-based barrier cannot impose braking. Braking-distance barriers \cite{wang2017safety} bring the acceleration into the inequality, but certify only the brake maneuver, and therefore reject states from which the vehicle could in fact pass safely using steering. High-order CBFs \cite{xiao2022high} instead differentiate the distance a second time, so that the acceleration appears, but certify a subset of the original safe set and introduce auxiliary class-$\mathcal{K}$ functions that are difficult to tune.
 
A second line of work builds the barrier in the space of relative velocities, where the input appears after one differentiation of the barrier function. Following the classical velocity obstacle \cite{fiorini1998motion}, the Collision-Cone CBF (C3BF) \cite{thontepu2023collision} keeps the relative velocity outside the cone of velocities aimed into the obstacle. The result is a valid barrier for the unicycle and the bicycle dynamics, and \cite{haraldsen2024safety} applies the velocity-obstacle principle with CBFs to unicycle-type vehicles that must keep a positive forward speed. The cone is conservative, however, since it rejects every velocity aimed into the obstacle, however slow. The Dynamic Parabolic CBF (DPCBF) \cite{park2026beyond} reduces that conservatism by replacing the cone with a parabolic region whose boundary depends on the relative velocity, so a slow approach is allowed where a fast one is not. However, as \secref{sec:background} sets out, the parabolic region rests on minimum-speed and clearance assumptions that their controller does not enforce, and neither construction covers the singularity of the barrier at contact.
 
The safe set need not be described by a single barrier function. Boolean compositions of several barriers are treated in \cite{glotfelter2017nonsmooth}, combinatorial compositions in \cite{ong2025combinatorial}, and the feasibility of compositions under input constraints in \cite{breeden2023compositions}. The velocity-obstacle-based CBF (VOCBF) \cite{huang2025dynamic} has this structure. It writes the outside of the cone as the union of the two half-spaces its edges define, and enforces a disjunction of the two edge-wise barrier inequalities, with binary variables selecting which holds. Its mixed-integer formulation scales poorly in the number of obstacles.
 
We introduce a velocity-space barrier for the kinematic bicycle, the \emph{shifted collision-cone CBF} (SC3BF), which adds a state-dependent \emph{allowance} to the cone condition, so the robot may approach the obstacle at a rate that grows with distance and with its own speed. The resulting filter needs neither a minimum forward speed nor a clearance margin. Our contributions are as follows:
\begin{enumerate}
\item We construct the barrier and the quadratic program that enforces it, and prove that the safe set is forward invariant under bounded inputs, including at contact.
\item We prove, for the kinematic bicycle, that for every choice of the bicycle's and the obstacle's parameters some nonzero allowance preserves safety, and derive one in closed form. We also introduce a certification procedure that yields substantially larger allowances for a given system.
\item We compare SC3BF to C3BF, DPCBF and VOCBF in simulation, among up to $100$ moving obstacles. SC3BF reaches the goal more often at every obstacle count and modifies the nominal input less than half as much as any of them.
\end{enumerate}

\section{Background}
\label{sec:background}
\subsection{Control Barrier Functions and the CBF-QP}
 
Consider
\begin{equation}
    \dot{x} = f(x) + g(x) u, \quad x \in \mathcal{X} \subseteq \mathbb{R}^n, \quad u \in \mathcal{U} \subseteq \mathbb{R}^m,
\end{equation}
where $f$ and $g$ are locally Lipschitz and $\mathcal{U}$ is the (possibly bounded) admissible input set. A safe set is the zero-superlevel set $\mathcal{C} = \{ x \in \mathcal{X} : h(x) \geq 0 \}$ of a continuously differentiable function $h : \mathcal{X} \to \mathbb{R}$.
 
The function $h$ is a Control Barrier Function (CBF) on $\mathcal{C}$ if there exists an extended class-$\mathcal{K}_\infty$ function $\alpha$ such that
\begin{equation}
    \sup_{u \in \mathcal{U}} \left[ L_f h(x) + L_g h(x) u + \alpha(h(x)) \right] \geq 0 \quad \forall x \in \mathcal{X},
    \label{eq:cbf_condition}
\end{equation}
where $L_f h$ and $L_g h$ are the Lie derivatives of $h$ along $f$ and $g$. Under this \emph{CBF condition}, any locally Lipschitz controller $u(x)$ satisfying the \emph{barrier condition}
\begin{equation}
    L_f h(x) + L_g h(x) u(x) \geq -\alpha(h(x))
    \label{eq:cbf_inequality}
\end{equation}
keeps the system inside $\mathcal{C}$ \cite{ames2017control}.
 
Since \eqref{eq:cbf_inequality} is affine in $u$, the CBF can be used as a safety filter on a nominal controller $u_\text{nom}(x)$. The filtered control is obtained by solving the quadratic program
\begin{equation}
    \begin{aligned}
        u^* = \arg\min_{u \in \mathcal{U}} \quad & \| u - u_\text{nom}(x) \|^2 \\
        \text{s.t.} \quad & L_f h(x) + L_g h(x) u \geq -\alpha(h(x)).
    \end{aligned}
    \label{eq:cbf_qp}
\end{equation}
When the safe set is an intersection of several such sets, an inequality of the form \eqref{eq:cbf_inequality} is added for each.
 
\subsection{Bicycle Dynamics, Obstacle Model, and Line-of-Sight Frame}
\label{subsec:bicycle}

We consider the kinematic bicycle model \cite{polack2017kinematic}. Its state is $\bm{x} = (x, y, \theta, v)^\top$, where $\bm{p} = (x, y)^\top$ is the position of the center of mass, $\theta$ the heading and $v$ the forward speed, and its inputs are the acceleration $a$ and the front-wheel steering angle $\delta$. With $\ell_f$ and $\ell_r$ the distances from the center of mass to the front and rear axles, the steering angle determines the slip angle $\beta = \tan^{-1}\bigl(\ell_r\tan\delta/(\ell_f + \ell_r)\bigr)$, the angle between the velocity of the center of mass and the heading, and the model is $\dot x = v\cos(\theta + \beta)$, $\dot y = v\sin(\theta + \beta)$, $\dot\theta = (v/\ell_r)\sin\beta$ and $\dot v = a$. As in \cite{thontepu2023collision,park2026beyond}, we assume that $\beta$ is small, so that $\cos\beta \approx 1$ and $\sin\beta \approx \beta$, and take $\bm{u} = (a, \beta)^\top$ as the input. This gives the control-affine model
\begin{equation}
\underbrace{\begin{bmatrix} \dot x \\ \dot y \\ \dot\theta \\ \dot v \end{bmatrix}}_{\dot{\bm{x}}}
= \underbrace{\begin{bmatrix} v\cos\theta \\ v\sin\theta \\ 0 \\ 0 \end{bmatrix}}_{f(\bm{x})}
+ \underbrace{\begin{bmatrix} 0 & -v\sin\theta \\ 0 & v\cos\theta \\ 0 & v/\ell_r \\ 1 & 0 \end{bmatrix}}_{g(\bm{x})}
\underbrace{\begin{bmatrix} a \\ \beta \end{bmatrix}}_{\bm{u}} .
\label{eq:fg}
\end{equation}
The input $\beta$ turns the heading and also moves the center of mass sideways: with $\bm{v} = (v\cos\theta, v\sin\theta)^\top$ the velocity along the heading, the first two rows of \eqref{eq:fg} give $\dot{\bm{p}} = \bm{v} + v\beta(-\sin\theta, \cos\theta)^\top$. The admissible input set is $\mathcal{U} = \{(a, \beta)^\top : |a| \le a_{\max},\ |\beta| \le \beta_{\max}\}$.
 
A single dynamic obstacle is modeled as a disk of radius $r_o$ at position $\bm{p}_o(t) \in \mathbb{R}^2$ moving with constant inertial velocity $\bm{v}_o$ with magnitude bounded by $v_{o,\max}$. The combined safety radius is $r = r_r + r_o$, where $r_r$ is the robot's radius. Relative position and velocity are $\Delta\bm{p} = \bm{p}_o - \bm{p}$ and $\Delta\bm{v} = \bm{v} - \bm{v}_o$, with $\bm{v}$ rather than $\dot{\bm{p}}$, so that the barrier depends on the state alone. Since the barrier depends on the obstacle, we treat $\bm{p}_o$ as part of the state, writing $x := (\bm{x}, \bm{p}_o)$ for the augmented state, with $\dot{\bm{p}}_o = \bm{v}_o$, and take Lie derivatives along these augmented dynamics, as in \cite{thontepu2023collision}.
We write $\rho = \|\Delta\bm{p}\|$ for the relative distance, and $d = \sqrt{\rho^2 - r^2}$ for the tangent distance. The forward speed is taken to lie in $[0, v_{\max}]$, an assumption used only by the feasibility result of \secref{subsec:results}. To bound the worst-case approach speed we define $w_{\max} := v_{\max} + v_{o,\max}$, so that $\|\Delta\bm{v}\| \le w_{\max}$.
 
The \emph{line-of-sight (LoS) frame} is the frame rotated by $\gamma = \atan2(\Delta p_y, \Delta p_x)$ so that its first axis is aligned with $\Delta\bm{p}$. With $\mathbf{R}(\cdot)$ the planar rotation matrix, the relative velocity and the heading in this frame are
\begin{equation}
\Delta\tilde{\bm{v}} = \mathbf{R}(-\gamma)\,\Delta\bm{v} =: \begin{bmatrix} \Delta v_\parallel \\ \Delta v_\perp \end{bmatrix},
\qquad
\tilde\theta = \theta - \gamma .
\label{eq:los}
\end{equation}
Thus $\Delta v_\parallel$ is the closing rate, the component of $\Delta\bm{v}$ along the line of sight, which is positive whenever $\Delta\bm{v}$ makes an acute angle with $\Delta\bm{p}$. The transverse component is $\Delta v_\perp$, and $\tilde\theta$ is the heading measured from the line of sight.

\subsection{Related Work: Velocity-Space CBFs}
 
A direct distance-based CBF $h = \rho^2 - r^2$ cannot be used in the CBF-QP \eqref{eq:cbf_qp} without modification. Its rate $2\rho\dot\rho$ does not contain the acceleration (see \appref{app:rate}). To avoid this, recent work \cite{thontepu2023collision,haraldsen2024safety,park2026beyond,huang2025dynamic} maps the obstacle constraint from configuration space into the space of relative velocities, where the constraint acquires relative degree one.

The Collision-Cone CBF (C3BF) \cite{thontepu2023collision} uses the observation that, under constant-velocity motion, collision occurs only if the relative velocity vector points into the obstacle safety disk. The set of such directions forms a cone in velocity space of half-angle $\arcsin(r / \rho)$, called the collision cone. The robot is safe as long as $\Delta \bm{v}$ stays outside this cone, and each obstacle contributes a cone of its own. This condition can be written as
\begin{equation}
    h_\text{C3BF}(\Delta \bm{p}, \Delta \bm{v}) = -\Delta \bm{p}^\top \Delta \bm{v} + \|\Delta \bm{v}\| \sqrt{\|\Delta \bm{p}\|^2 - r^2} \geq 0,
    \label{eq:c3bf}
\end{equation}
and \cite{thontepu2023collision} shows that $h_\text{C3BF}$ has relative degree one for the kinematic bicycle, so the CBF-QP can be applied directly.
 
C3BF is conservative because its cone half-angle depends only on $\|\Delta \bm{p}\|$ and $r$. The condition therefore ignores the magnitude of $\Delta \bm{v}$. DPCBF \cite{park2026beyond} reduces this by replacing each cone with a parabolic region in the same $v$-space, whose vertex and curvature depend on both $\|\Delta \bm{p}\|$ and $\|\Delta \bm{v}\|$. Its result is established under two assumptions:
\begin{enumerate}[label=\textbf{(A\arabic*)}, leftmargin=*]
\item \emph{Forward speed.} $v \in [v_{\min}, v_{\max}]$ with $v_{\max} > v_{\min} > 0$.
\label{ass:vmin}
\item \emph{Clearance margin.} There exist $s > 1$ and a finite sensing range $p_{\max}$ such that $sr \le \rho \le p_{\max}$, so $d \geq d_{\min} = r\sqrt{s^2 - 1} > 0$.
\label{ass:clearance}
\end{enumerate}
\ref{ass:vmin} excludes the degeneracy $v = 0$ at which the steering authority $v/\ell_r$ vanishes. \ref{ass:clearance} is assumed rather than enforced by the controller, which yields $h \ge 0$ and not $\rho \ge sr$. Under \ref{ass:vmin}--\ref{ass:clearance}, \cite{park2026beyond} shows that at every state on the boundary of the safe set some input in $\mathcal{U}$ satisfies the barrier condition \eqref{eq:cbf_inequality}, one constraint per obstacle.
 
A third line of work replaces the single barrier by a disjunction. VOCBF \cite{huang2025dynamic} imposes the barrier condition on at least one of the two half-spaces whose union is the outside of the cone, so its safe set is that of C3BF. Besides the mixed-integer program, \cite{huang2025dynamic} proposes a decomposition into quadratic programs and a learned selection. We compare against the mixed-integer program.
 
These methods leave three gaps. C3BF is conservative, since its condition depends only on the direction of $\Delta\bm{v}$. DPCBF removes much of that conservatism but rests on \ref{ass:vmin}--\ref{ass:clearance}, neither of which is enforced by the controller proposed in \cite{park2026beyond}. VOCBF needs neither assumption, but the active half-space is chosen by the solver rather than by the barrier, and its formulation is a mixed-integer program. Finally, C3BF, DPCBF and VOCBF all contain $d = \sqrt{\rho^2 - r^2}$, whose derivative $\dot d = \rho\dot\rho/d$ is singular at contact: \cite{thontepu2023collision} verifies the CBF condition \eqref{eq:cbf_condition} only where $d > 0$, \cite{huang2025dynamic} assumes that the robot is not in contact with the obstacle, and \cite{park2026beyond} bounds the term by \ref{ass:clearance}.

\section{Proposed Method}
\label{sec:method}
 
\subsection{Proposed Barrier Function}
\label{subsec:def}
 
A distant obstacle approached slowly can often still be avoided, yet C3BF declares any state with $\Delta\bm{v}$ inside the collision cone unsafe regardless of the obstacle distance or the approach speed \cite{park2026beyond}. We therefore allow the robot to approach the obstacle at any rate up to $c$, imposing the cone condition only on the excess $\Delta v_\parallel - c$. Geometrically this slides the apex of the cone away from the origin of the relative-velocity plane, so wherever $c > 0$ motion toward the obstacle is no longer immediately unsafe. The \emph{allowance}
\begin{equation}
c := \kappa\, d^{\,p} v^2, \qquad \kappa \ge 0, \quad p \ge 1,
\label{eq:allowance}
\end{equation}
vanishes at contact and at rest, and grows with distance and robot speed.
 
The cone has two edges, indexed by $\sigma \in \{+1, -1\}$, as shown in \figref{fig:cone}. We define one function for each and take their maximum as the barrier function. With $\kappa = 0$ the two functions describe the edges of the C3BF cone, and the barrier has the same safe set as C3BF. Since $c \ge 0$, a positive $\kappa$ can only enlarge this set, which gives a less conservative safe set, as illustrated in \figref{fig:cone}.
 
\begin{figure}[!t]
    \centering
    \includegraphics[width=0.72\linewidth]{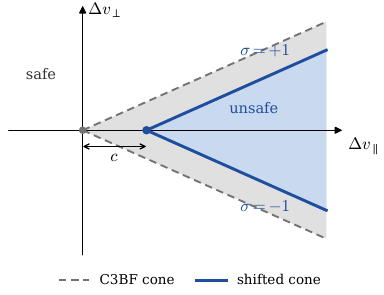}
    \caption{The C3BF cone (dashed) and the SC3BF cone (solid) in the relative-velocity plane, in the line-of-sight frame. The allowance $c$ moves the apex along the closing direction, and the gray region between the two cones is the conservatism it removes. }
    \label{fig:cone}
\end{figure}
 
\begin{definition}[SC3BF]
\label{def:barrier}
For $\sigma \in \{+1,-1\}$ let
\begin{equation}
\begin{aligned}
h_\sigma &:= -r\bigl(\Delta v_\parallel - c\bigr) + \sigma\, d\, \Delta v_\perp \\
&\phantom{:}= -r \Delta v_\parallel + \sigma d \Delta v_\perp + r \kappa d^{\,p} v^2 ,
\end{aligned}
\label{eq:hsigma}
\end{equation}
and set $h := \max(h_{+1}, h_{-1})$ and $\mathcal{C} := \{x : \rho \ge r,\ h(x) \ge 0\}$.
\end{definition}
 
\subsection{Proposed Controller}
\label{subsec:controller}
 
For $\rho > r$, differentiating $h_\sigma$ along the augmented dynamics gives, as derived in \appref{app:rate},
\begin{equation}
\begin{aligned}
\dot h_\sigma &= L_f h_\sigma + a\bigl(c^a_\sigma + S\bigr) + \beta B_\sigma ,\\
L_f h_\sigma &= \frac{r \Delta v_\perp^2}{\rho} - \frac{r \Delta v_\parallel}{d}\Bigl(\frac{\sigma r \Delta v_\perp}{\rho} + \kappa p\, d^{\,p-1} \rho v^2\Bigr), \\
c^a_\sigma &= -r\cos\tilde\theta + \sigma d \sin\tilde\theta, \qquad
c^\beta_\sigma = r \sin\tilde\theta + \sigma d \cos\tilde\theta ,\\
B_\sigma &= \tfrac{v^2}{\ell_r}c^\beta_\sigma + \tfrac{v\rho}{d}\bigl(\sigma\Delta v_\perp + r\kappa p\, d^{\,p-1}v^2\bigr)\sin\tilde\theta \\
&\quad + \tfrac{v}{\rho}\bigl(r\Delta v_\perp + \sigma d\,\Delta v_\parallel\bigr)\cos\tilde\theta ,
\end{aligned}
\label{eq:hdot}
\end{equation}
with $S = 2 r \kappa d^{\,p} v$. The last two terms of $B_\sigma$ come from the sideways motion of the center of mass. 
The inequality $\dot h_\sigma \ge -\alpha(h_\sigma)$ cannot be imposed at contact, where $\rho = r$, because the formula \eqref{eq:hdot} is singular there: $L_f h_\sigma$ contains $r\Delta v_\parallel/d$. Instead, the controller requires a nondecreasing distance and a nonincreasing closing rate, two constraints affine in $\bm{u}$ derived in \appref{app:contact}. Imposing $\dot h_\sigma \ge -\alpha(h_\sigma)$ for $\rho > r$ and these \emph{contact constraints} for $0 < \rho \le r$ gives
\begin{equation}
\begin{aligned}
\bm{u}^* = \arg\min_{\bm{u} \in \mathcal{U}} \;\; & \|\bm{u} - \bm{u}_\text{nom}\|^2 \\
\text{s.t.} \;\; &
\begin{cases}
\dot h_\sigma(x, \bm{u}) \ge -\alpha(h_\sigma) & \text{if } \rho > r, \\[3pt]
\dot\rho \ge 0, \ \ \tfrac{d}{dt}\Delta v_\parallel \le 0 & \text{if } 0 < \rho \le r,
\end{cases}
\end{aligned}
\label{eq:ours_qp}
\end{equation}
where $\alpha$ is an extended class-$\mathcal{K}$ function, $\sigma = +1$ if $\Delta v_\perp \ge 0$ and $\sigma = -1$ otherwise, which selects the edge attaining the maximum since $h_{+1} - h_{-1} = 2d\,\Delta v_\perp$. Because the sign of $\Delta v_\perp$ fixes the active edge, \eqref{eq:ours_qp} is an ordinary quadratic program.
 
\subsection{Main Results}
\label{subsec:results}
 
We call a pair $(\kappa, p)$ \emph{admissible} if, for every extended class-$\mathcal{K}$ function $\alpha$, \eqref{eq:ours_qp} is feasible at every state of $\mathcal{C}$. We now show that admissible pairs exist, and that the solutions of \eqref{eq:ours_qp} prevent collisions.
 
At each state with $\rho > r$ the program enforces $\dot h_\sigma \ge -\alpha(h_\sigma)$ for only the edge selected by the sign of $\Delta v_\perp$. Write $\mathcal{C}^\sigma := \{h_\sigma \ge 0\} \cap \{\sigma\Delta v_\perp \ge 0\}$ for the part of $\mathcal{C}$ on which edge $\sigma$ attains the maximum. Together, $\mathcal{C}^{+1}$ and $\mathcal{C}^{-1}$ cover $\mathcal{C}$, so feasibility needs to be established only for edge $\sigma$ on $\mathcal{C}^\sigma$. In the following proposition, we show that there always exists a pair $(\kappa, p)$ for which \eqref{eq:ours_qp} is feasible at every state of $\mathcal{C}$ with $\rho > r$.
 
\begin{prop}[Existence of an admissible pair]
\label{prop:A}
For every positive safety radius $r$, speed bound $v_{\max}$, input bounds $a_{\max}$ and $\beta_{\max}$ and rear-axle distance $\ell_r$, and every obstacle speed bound $v_{o,\max} \ge 0$, there exists a pair $(\kappa, p)$ with $\kappa > 0$ such that, for every extended class-$\mathcal{K}$ function $\alpha$, \eqref{eq:ours_qp} is feasible at every state of $\mathcal{C}$ with $\rho > r$.
\end{prop}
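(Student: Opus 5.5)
Feasibility for every $\alpha$ reduces to the boundary. Where $h_\sigma > 0$ the right-hand side $-\alpha(h_\sigma)$ is negative, but $\alpha$ is arbitrary, so a slowly growing $\alpha$ makes this nearly the same as the boundary case. It is cleaner to prove the stronger statement: on $\mathcal{C}^\sigma$ with $\rho > r$, some $\bm{u} \in \mathcal{U}$ satisfies $\dot h_\sigma \ge 0$. I do not expect this to hold at every interior point, so I will instead aim for the following. Whenever $L_f h_\sigma < 0$, there is an input with $a(c^a_\sigma + S) + \beta B_\sigma \ge -L_f h_\sigma$, uniformly over $\mathcal{C}^\sigma$ with $h_\sigma$ small. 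Points with $h_\sigma$ bounded away from zero are handled by choosing $\kappa$ small enough that $L_f h_\sigma$ is controlled.

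On reflection, the honest route is to show $\sup_{\bm{u}\in\mathcal{U}}\dot h_\sigma \ge 0$ on all of $\mathcal{C}^\sigma\cap\{\rho>r\}$. Then $-\alpha(h_\sigma)\le 0$ gives feasibility for every $\alpha$ at once. The plan has three steps.

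\textbf{Step 1 (the case $\kappa = 0$).} Evaluate the terms of \eqref{eq:hdot} with $\kappa=0$. There $L_f h_\sigma = \frac{r}{\rho d}\Delta v_\perp(d\,\Delta v_\perp - \sigma r\,\Delta v_\parallel)$. Using $h_\sigma \ge 0$, i.e.\ $\sigma d\Delta v_\perp \ge r\Delta v_\parallel$, together with $\sigma\Delta v_\perp\ge 0$, this gives $L_f h_\sigma \ge 0$. So the drift alone is safe, and $\bm{u}=0$ satisfies the constraint. This is the known C3BF-type fact, but here it comes without needing $v>0$.

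\textbf{Step 2 (perturbation for $\kappa>0$).} With $\kappa>0$, the $\kappa$-dependent part of $L_f h_\sigma$ is $-r\Delta v_\parallel\,\kappa p\,d^{p-1}\rho v^2/d$. It is harmful only when $\Delta v_\parallel>0$. Moreover, $h_\sigma\ge0$ now only gives $\sigma d\Delta v_\perp \ge r\Delta v_\parallel - r\kappa d^p v^2$, so the $\kappa=0$ term may also become negative, by an amount of order $\kappa d^{p}v^2\,|\Delta v_\perp|/\rho$. I will bound both deficits by $C\kappa\,d^{p-1}v^2 w_{\max}(\ldots)$, using $\|\Delta\bm v\|\le w_{\max}$ and $v\le v_{\max}$. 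The choice $p\ge 2$ is what controls the $1/d$ singularity near contact, so I would pick $p=2$ (or $p=3$ if needed).

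The next task is to compensate these deficits with the input. Every deficit carries a factor $v^2$, and the available authority is $-a_{\max}|c^a_\sigma+S|$ or $\beta_{\max}|B_\sigma|$, which is at least of order $v^2|c^\beta_\sigma|/\ell_r$ up to $\kappa$-corrections. Since $(c^a_\sigma)^2+(c^\beta_\sigma)^2=\rho^2$, one of $|c^a_\sigma|$ and $|c^\beta_\sigma|$ is at least $\rho/\sqrt2$. Choosing $a=-a_{\max}\operatorname{sgn}(c^a_\sigma+S)$ or $\beta=\beta_{\max}\operatorname{sgn}(B_\sigma)$ therefore yields a gain of at least $\min(a_{\max},\beta_{\max}v^2/\ell_r)\,\rho/\sqrt2$ minus $O(\kappa)$ terms.

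\textbf{Step 3 (main obstacle: small $v$ and the case $c^a$ dominant).} When $|c^\beta_\sigma|$ is small, steering helps only through $\beta$'s $v^2$ term, and braking removes only $a_{\max}\rho/\sqrt2$. That removal is independent of $v$, while the deficit is $O(\kappa v^2)$; at $v=0$ the deficit vanishes, which is favorable. The difficulty is the regime where the sign of $c^a_\sigma$ forces acceleration rather than braking, so that $a$ cannot be used at small $v$. I would handle it as follows. Show that in that regime the $\kappa$-deficit is itself proportional to $|c^\beta_\sigma|$ or vanishes. Failing that, choose $\kappa \le \kappa^\star := \frac{\min(a_{\max},\beta_{\max}/\ell_r)}{C\,w_{\max}\,(\text{poly in } r, v_{\max}, p)}$, which gives the closed-form admissible pair. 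A residual worry is a large-$d$ region with $d^{p-1}$ unbounded. There I would exploit the fact that $h_\sigma \ge 0$ with large $\kappa d^p v^2$ leaves room: either $h_\sigma$ is far from the boundary, or the deficit is proportional to $h_\sigma$. In the latter case I would absorb it, which fails for arbitrary $\alpha$. If that happens, I would restrict attention to the boundary-dominant argument and reconsider $p=1$ with a saturated allowance.
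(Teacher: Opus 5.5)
Your reduction to $\sup_{\bm u\in\mathcal U}\dot h_\sigma\ge 0$ on $\mathcal C^\sigma\cap\{\rho>r\}$ is the right one. It is in fact necessary: taking $\alpha(s)=\epsilon s$ with $\epsilon\to 0$ shows that feasibility for every $\alpha$ forces it, so your earlier idea of absorbing deficits into $\alpha$ is a dead end. Your $\kappa=0$ computation is also correct.

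\textbf{First gap: the drift.} You never apply $h_\sigma\ge 0$ to the $\kappa$-dependent part of the drift, and so conclude that the factor $r\Delta v_\parallel/d$ forces $p\ge 2$. It does not. Dividing $h_\sigma\ge0$ by $d$ gives $r\Delta v_\parallel/d\le\zeta+r\kappa d^{\,p-1}v^2$, with $\zeta=\sigma\Delta v_\perp\ge 0$. Multiplying this by the nonnegative factor $\frac{r\zeta}{\rho}+\kappa p\,d^{\,p-1}\rho v^2$ cancels the $r\zeta^2/\rho$ term exactly. At $p=1$ it gives $L_f h_\sigma\ge-\rho v^2(2\kappa w_{\max}+r\kappa^2v_{\max}^2)$ for every $\rho>r$, with no singularity at contact.

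Your choice $p\ge 2$ is not merely unproven far from the obstacle; for $v_{o,\max}>0$ it is false. With $\varphi:=\arccos(r/\rho)+\sigma\tilde\theta$ one has $c^a_\sigma=-\rho\cos\varphi$. At $p=2$, take $d$ large, $\Delta v_\perp=0$, a small closing rate $\Delta v_\parallel>0$, and $v=\rho\cos\varphi/(2r\kappa d^2)\approx 1/(2r\kappa d)$. This is the robot crawling nearly perpendicular to the line of sight while the obstacle closes along it. Then:
\begin{itemize}
\item $c^a_\sigma+S=0$ exactly;
\item a tilt $\varphi=O(1/d)$ makes $B_\sigma=0$;
\item $h_\sigma\approx 1/(4r\kappa)-r\Delta v_\parallel>0$;
\item $L_f h_\sigma=-2r\kappa\rho v^2\Delta v_\parallel<0$.
\end{itemize}
So $\sup_{\bm u}\dot h_\sigma<0$ at that state, for every $\kappa>0$. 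This is why the proposition is proved with $p=1$, where $d^{\,p-1}=1$, and why $p>1$ is certified only on bounded ranges.

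\textbf{Second gap: the authority.} Your identity $(c^a_\sigma)^2+(c^\beta_\sigma)^2=\rho^2$ is a good start. However, the remaining terms of $B_\sigma$, which come from the sideways motion of the center of mass, are not $\kappa$-corrections. They are independent of $\kappa$, of order $v$ rather than $v^2$, and $\frac{v\rho}{d}\sigma\Delta v_\perp\sin\tilde\theta$ is singular at contact. At small $v$ they dominate $\frac{v^2}{\ell_r}c^\beta_\sigma$ and can cancel it. A fixed split at $\rho/\sqrt2$ therefore cannot produce authority of order $\rho v^2$, which is what a deficit of order $\kappa\rho v^2$ requires. The sign of $a$ is not an issue: $\mathcal U$ is a box, so both signs are always available.

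The missing idea has two parts. Assume $\Delta v_\parallel>0$; otherwise $\bm u=0$ already works.
\begin{itemize}
\item Write $B_\sigma=\sigma v(Q\sin\varphi-P\cos\varphi)$. Every term of $Q$ is nonnegative, so $Q\ge v\rho/\ell_r$ and the singular term only helps. Also $0\le P\le w_{\max}+r\kappa v_{\max}^2$ on $\mathcal C^\sigma$, the lower bound again coming from $h_\sigma\ge0$.
\item Split at the speed-scaled threshold $|c^a_\sigma+S|\ge\varepsilon\rho v^2$, with $\varepsilon=\nu/a_{\max}$. Either acceleration supplies $\rho v^2\nu$, or $|\cos\varphi|<\varepsilon v^2+2r\kappa v$. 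In the second case $vP|\cos\varphi|$ is of order $v^2$, leaving $|B_\sigma|\ge\rho v^2\Lambda$ with $\Lambda\to 1/\ell_r$ as $\kappa\to 0$.
\end{itemize}
Drift and authority are then both proportional to $\rho v^2$. Feasibility reduces to a state-free parameter inequality that holds for all sufficiently small $\kappa$, which yields a closed-form threshold.

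As written, your Step 3 is a list of contingencies rather than an argument. Its last fallback, a saturated allowance, changes the barrier rather than proving the statement.
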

 
\emph{Proof:} See \appref{app:gain}.
 
At contact, where $\rho = r$, the program imposes the contact constraints instead, and $\bm{u} = 0$ satisfies them (\appref{app:contact}). Hence \eqref{eq:ours_qp} is feasible at every state of $\mathcal{C}$.
 
Feasibility at each state does not by itself show that $\mathcal{C}$ is forward invariant. The standard result of \cite{ames2017control} assumes a single continuously differentiable $h$, which $h = \max(h_{+1}, h_{-1})$ is not, so \thmref{thm:safety} establishes invariance of $\mathcal{C}$ directly.
 
\begin{theorem}[Safety]
\label{thm:safety}
Let $x(0) \in \mathcal{C}$. If the applied input satisfies the constraints of \eqref{eq:ours_qp} at every time, then $x(t) \in \mathcal{C}$, and hence $\rho(t) \ge r$, for all $t \ge 0$.
\end{theorem}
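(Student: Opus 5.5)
The plan is a contradiction argument on the first exit time. Suppose the trajectory leaves $\mathcal{C}$, and set $t^* := \inf\{t \ge 0 : x(t) \notin \mathcal{C}\}$. Since the state is absolutely continuous and $\mathcal{C}$ is closed, $x(t^*) \in \mathcal{C}$. Moreover, there are times $t_k \downarrow t^*$ with either $\rho(t_k) < r$ or $h(x(t_k)) < 0$. I would split into two cases according to whether $\rho(t^*) = r$ or $\rho(t^*) > r$.

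\emph{Case $\rho(t^*) > r$.} By continuity $\rho > r$ on a neighborhood $[t^*, t^*+\epsilon)$. Here both $h_{\pm1}$ are continuously differentiable, and the distance constraint cannot be the one violated, so $h$ must become negative right after $t^*$. The key step is that $\dot h_\sigma \ge -\alpha(h_\sigma)$ is only imposed for the active edge, with $\sigma = \operatorname{sign}(\Delta v_\perp)$.

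To handle this, I would use $h = \max(h_{+1},h_{-1})$ as a nonsmooth barrier and bound its upper Dini derivative. At any time where $h_{+1} \ne h_{-1}$, the max is attained by the edge the QP selects. Since $h_{+1}-h_{-1} = 2d\,\Delta v_\perp$ and $d>0$, this is exactly the edge with the sign of $\Delta v_\perp$, so $\dot h = \dot h_\sigma \ge -\alpha(h)$. At times where $h_{+1}=h_{-1}$, we have $\Delta v_\perp = 0$ and the program enforces the $\sigma=+1$ constraint. The max of two $C^1$ functions satisfies $D^+ h \ge$ the derivative of the active function that remains maximal. Since $\max$ dominates $h_{+1}$, I would compare instead through the lower Dini derivative: $h(t) \ge h_{+1}(t)$ for all $t$ with equality at such a point. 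Hence $\liminf_{s\downarrow0}(h(t+s)-h(t))/s \ge \dot h_{+1}(t) \ge -\alpha(h(t))$.

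So $D_+ h(t) \ge -\alpha(h(t))$ holds for all $t$ in the interval. This is the only place the selection rule matters, and I expect it to be the main technical point. It may be cleaner to pass through the one-sided comparison lemma for lower Dini derivatives; this requires a Lipschitz or uniqueness condition on $\alpha$, or else an argument that $h$ cannot cross zero when $h=0$ forces $D_+h \ge 0$. With $h(t^*) \ge 0$, the comparison lemma gives $h \ge 0$ on $[t^*,t^*+\epsilon)$, contradicting the choice of $t^*$.

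\emph{Case $\rho(t^*) = r$.} Here I would use the contact constraints from \appref{app:contact}. While $0<\rho\le r$ they force $\dot\rho \ge 0$, so $\rho$ cannot drop below $r$ there. The subtle point is that the constraints only apply once $\rho \le r$. So I would argue that on any interval after $t^*$ where $\rho < r$, we are in the regime $\rho\le r$, hence $\dot \rho\ge 0$. This is incompatible with $\rho$ decreasing from $r$ to below $r$: take the last time before $t_k$ at which $\rho = r$ and integrate.

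Next I must show $h \ge 0$ persists. At $\rho=r$ we have $d=0$ and $c=0$, so $h = -r\Delta v_\parallel$, and $h\ge0$ means $\Delta v_\parallel \le 0$. While $\rho$ stays equal to $r$, the constraint $\frac{d}{dt}\Delta v_\parallel \le 0$ keeps $\Delta v_\parallel \le 0$, hence $h \ge 0$. If $\rho$ rises above $r$, we return to the first case with $h\ge0$ by continuity. Continuity of $h$ at $d=0$ holds since $d^p$ and $d$ are continuous as $\rho\downarrow r$.

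Combining both cases contradicts the existence of $t^*$, so $x(t)\in\mathcal{C}$ and $\rho(t)\ge r$ for all $t\ge0$. I expect the Dini-derivative treatment of the switching edge, and the junction between the two regimes at $\rho=r$, to require the most care.
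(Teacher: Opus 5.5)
Your first step ($\rho\ge r$, from $\dot\rho\ge 0$ on an interval where $0<\rho\le r$) matches the paper's Step 1 and is fine. The case $\rho(t^*)>r$ is also essentially sound. Your tie-breaking argument via $h\ge h_{+1}$ with equality is a valid substitute for the paper's lemma on the maximum of two absolutely continuous functions, provided every derivative is read as holding almost everywhere.

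The genuine gap is in the case $\rho(t^*)=r$. You split what happens after $t^*$ into ``$\rho$ stays equal to $r$'' and ``$\rho$ rises above $r$'', but these are not exhaustive. The contact set $K=\{t:\rho(t)=r\}$ may accumulate at $t^*$, with the robot touching and leaving infinitely often. It may also be a closed set of positive measure with empty interior. Then every neighborhood of $t^*$ contains both contact times and separation intervals $(a,b)$ on which $\rho>r$, and neither of your two arguments closes:
\begin{itemize}
\item The contact constraint controls $\Delta v_\parallel$ only at times in $K$. On the separation intervals it is not imposed, so it says nothing about how $\Delta v_\parallel$, and hence $h$ at the next contact time, changes across them.
\item The barrier argument on a separation interval $(a,b)$ needs $h(a)\ge 0$ at its left endpoint. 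That is not yet known, since the times with $h<0$ also accumulate at $t^*$.
\end{itemize}
Iterating over finitely many phases cannot break this circularity.

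The missing idea is one monotonicity argument that treats both regimes at once, on an interval $[t_0,t_1]$ with $h(t_0)=0$ and $h<0$ on $(t_0,t_1]$. Work with $G:=-r\Delta v_\parallel$. It is absolutely continuous across contact, unlike $h$, which contains $d$. It equals $h$ on $K$, and it satisfies $\dot G\ge 0$ almost everywhere on $K$ by the contact constraint. For $s<t$ in $K\cap[t_0,t_1]$, write $[s,t]\setminus K$ as a countable union of disjoint open intervals $(a,b)$ with $a,b\in K$. Then
\[
G(t)-G(s)=\int_{K\cap[s,t]}\dot G\,dt'+\sum_{(a,b)}\bigl(h(b)-h(a)\bigr).
\]
Each summand is nonnegative, because $h<0$ on $(a,b)$ forces $\dot h\ge-\alpha(h)>0$ almost everywhere there, and $h$ is continuous on $[a,b]$. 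Hence $h$ is nondecreasing on $K\cap[t_0,t_1]$. Combined with monotonicity on each separation interval, $h$ is nondecreasing on all of $[t_0,t_1]$, which contradicts $h(t_1)<0=h(t_0)$.

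Arguing on $\{h<0\}$ this way also removes your comparison-lemma concern for non-Lipschitz $\alpha$. You never need uniqueness, only the strict sign $-\alpha(h)>0$.
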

 
\emph{Proof:} See \appref{app:safety}.
 
With \propref{prop:A}, \thmref{thm:safety} keeps \eqref{eq:ours_qp} feasible along the trajectory: inputs that satisfy its constraints keep the state in $\mathcal{C}$, where the program is feasible.
 
\subsection{Choosing the Allowance}
\label{subsec:choosing}
 
\propref{prop:A} shows that a pair $(\kappa,p)$ exists such that our controller \eqref{eq:ours_qp} is always feasible, but does not say which value to use. At $p = 1$, \appref{app:gain} supplies a threshold $\kappa^\star$ in closed form, and every pair $(\kappa, 1)$ with $0 < \kappa \le \kappa^\star$ is admissible, so a usable gain requires no numerical search. This threshold is conservative because its derivation bounds $L_f h_\sigma$ and the contribution of the input separately, each at its own worst case. For a given system, the numerical procedure of \appref{app:certify} can verify that \eqref{eq:ours_qp} remains feasible with a larger $\kappa$ at every state of a specified domain, rather than only at sampled states.
 
The procedure certifies one pair $(\kappa, p)$ at a time, and every certified pair keeps \eqref{eq:ours_qp} feasible on the certified domain, so the guarantees of \secref{subsec:results} hold for each. Within this range, $\kappa$ is a design choice: a larger gain makes the safe set of each obstacle less conservative, but also lets the robot approach obstacles faster. The exponent $p$ sets how quickly the allowance grows with distance. We evaluate $p = 1$ and $p = 1.2$.

\section{Evaluation}
\label{sec:eval}
 
\subsection{Setup and Metrics}
 
The robot is the kinematic bicycle of \secref{subsec:bicycle} with $a_{\max} = 5$~m/s$^2$, $\beta_{\max} = 0.28$~rad, $\ell_r = 0.2$~m, $v_{\max} = 3.5$~m/s and radius $0.3$~m, integrated at $50$~ms. It drives from $(1, 7.5)$ to a goal at $(25, 7.5)$~m, through $20$, $50$ or $100$ obstacles of radius $0.7$~m placed uniformly at random in $[5, 30] \times [-5, 20]$~m, each moving in a random direction at a constant speed drawn from $[0, 1.2]$~m/s. Every method enforces its barrier at the inflated radius $R_\text{eff} = 1.05\,r$, which leaves a margin the continuous-time analysis does not model but a discrete time step needs. A run is cut off after $2000$ steps.  The robot starts at $1$~m/s, heading toward the goal. Every method uses the same nominal controller, the goal-tracking controller of the framework of \cite{park2026beyond}, solves its program with Gurobi, and is given the same $100$ random instances.
 
DPCBF uses the gains reported for this obstacle radius, $k_\lambda = 0.144$ and $k_\mu = 0.505$. For SC3BF we use $\kappa = 0.18$ at $p = 1$ and $\kappa = 0.1672$ at $p = 1.2$, which the procedure of \appref{app:certify} verifies at the enforced radius $R_\text{eff}$, the first for every $\rho > R_\text{eff}$ and the second for $\rho \le 70$~m. The extended class-$\mathcal{K}$ function of \eqref{eq:cbf_condition} is taken linear, $\alpha(h) = k_\alpha h$, and each method is given the $k_\alpha$ with the highest success rate at each obstacle count, from a grid between $1.5$ and $40$.
 
A run succeeds if the robot comes within $0.3$~m of the goal. It fails, ending the run, if the quadratic program is infeasible or cannot be solved, if the robot overlaps any obstacle at the end of a step, which we record as a collision, or if the step limit is reached first. We also report the \emph{intervention}, which measures how far the filter moves the nominal input. At each step it is
\begin{equation}
\sqrt{\Bigl(\tfrac{a^* - a_\text{nom}}{a_{\max}}\Bigr)^{2} + \Bigl(\tfrac{\beta^* - \beta_\text{nom}}{\beta_{\max}}\Bigr)^{2}} ,
\label{eq:interv}
\end{equation}
so that each channel is measured relative to its input bound. For every method, we average the intervention over the steps of each successful run, and then over the successful runs.

\subsection{Results}
\label{subsec:results_exp}
 
\begin{table}[!t]
\centering
\caption{Success rate and intervention over 100 random instances per obstacle count. Intervention is averaged over successful runs. SC3BF is our method.}
\label{tab:results}
\footnotesize
\setlength{\tabcolsep}{3.8pt}
\renewcommand{\arraystretch}{1.05}
\begin{tabular}{lcccccc}
\toprule
& \multicolumn{2}{c}{20 obstacles} & \multicolumn{2}{c}{50 obstacles}
& \multicolumn{2}{c}{100 obstacles} \\
\cmidrule(lr){2-3}\cmidrule(lr){4-5}\cmidrule(lr){6-7}
Method & succ. & interv. & succ. & interv. & succ. & interv. \\
\midrule
C3BF \cite{thontepu2023collision}  & 22\,\% & 0.470 & 6\,\%  & 0.440 & 0\,\%  & --    \\
VOCBF \cite{huang2025dynamic}      & 53\,\% & 0.119 & 38\,\% & 0.488 & 12\,\% & 0.724 \\
DPCBF \cite{park2026beyond}        & 84\,\% & 0.357 & 48\,\% & 0.542 & 17\,\% & 0.684 \\
\midrule
SC3BF, $p = 1$                     & 91\,\% & \textbf{0.052} & 62\,\% & \textbf{0.120} & 22\,\% & \textbf{0.222} \\
SC3BF, $p = 1.2$                   & \textbf{92\,\%} & 0.053
                                   & \textbf{67\,\%} & 0.131
                                   & \textbf{27\,\%} & 0.229 \\
\bottomrule
\end{tabular}
\end{table}
 
With $p = 1.2$, SC3BF reaches the goal more often than every baseline at all three obstacle counts (\tabref{tab:results}). Its success rate exceeds DPCBF's by $8$, $19$ and $10$ percentage points: $92\%$ versus $84\%$, $67\%$ versus $48\%$, and $27\%$ versus $17\%$. Among successful runs, its intervention is less than half of each baseline's at every count. 

\section{Conclusion}
\label{sec:conclusion}
 
We introduced SC3BF, a velocity-space control barrier function that relaxes the
collision-cone condition by an allowance growing with distance and robot speed. The
construction is enforced by an ordinary quadratic program and keeps its safe set
forward invariant under bounded inputs, including at contact. SC3BF needs neither a
clearance margin nor a minimum forward speed, which are assumed in DPCBF's theoretical guarantees. We showed that, for any choice of the bicycle and
obstacle parameters, there exists a nonzero allowance that preserves safety. In simulations with up to $100$ moving obstacles, SC3BF reached the goal more often than C3BF, DPCBF and VOCBF at every obstacle count and modified the nominal input less than half as much as any of them.

\bibliographystyle{IEEEtran}
\bibliography{references}

@article{ames2017control,
  author={Ames, Aaron D. and Xu, Xiangru and Grizzle, Jessy W. and Tabuada, Paulo},
  title={Control Barrier Function Based Quadratic Programs for Safety Critical Systems},
  journal={IEEE Transactions on Automatic Control},
  volume={62},
  number={8},
  pages={3861--3876},
  year={2017}
}

@article{wang2017safety,
  author={Wang, Li and Ames, Aaron D. and Egerstedt, Magnus},
  title={Safety Barrier Certificates for Collisions-Free Multirobot Systems},
  journal={IEEE Transactions on Robotics},
  volume={33},
  number={3},
  pages={661--674},
  year={2017}
}

@article{xiao2022high,
  author={Xiao, Wei and Belta, Calin},
  title={High-Order Control Barrier Functions},
  journal={IEEE Transactions on Automatic Control},
  volume={67},
  number={7},
  pages={3655--3662},
  year={2022}
}

@article{glotfelter2017nonsmooth,
  author={Glotfelter, Paul and Cort\'es, Jorge and Egerstedt, Magnus},
  title={Nonsmooth Barrier Functions With Applications to Multi-Robot Systems},
  journal={IEEE Control Systems Letters},
  volume={1},
  number={2},
  pages={310--315},
  year={2017}
}

@article{fiorini1998motion,
  author={Fiorini, Paolo and Shiller, Zvi},
  title={Motion Planning in Dynamic Environments Using Velocity Obstacles},
  journal={The International Journal of Robotics Research},
  volume={17},
  number={7},
  pages={760--772},
  year={1998}
}

@inproceedings{thontepu2023collision,
  title={Collision cone control barrier functions for kinematic obstacle avoidance in ugvs},
  author={Thontepu, Phani and Goswami, Bhavya Giri and Tayal, Manan and Singh, Neelaksh and PI, Shyam Sundar and MG, Shyam Sundar and Sundaram, Suresh and Katewa, Vaibhav and Kolathaya, Shishir},
  booktitle={2023 Ninth Indian Control Conference (ICC)},
  pages={293--298},
  year={2023},
  organization={IEEE}
}

@inproceedings{haraldsen2024safety,
  author={Haraldsen, Aurora and Wiig, Martin S. and Ames, Aaron D. and Pettersen, Kristin Y.},
  title={Safety-Critical Control of Nonholonomic Vehicles in Dynamic Environments Using Velocity Obstacles},
  booktitle={2024 American Control Conference (ACC)},
  pages={3152--3159},
  year={2024}
}

@article{huang2025dynamic,
  author={Huang, Jihao and Zeng, Jun and Chi, Xuemin and Sreenath, Koushil and Liu, Zhitao and Su, Hongye},
  title={Dynamic Collision Avoidance Using Velocity Obstacle-Based Control Barrier Functions},
  journal={IEEE Transactions on Control Systems Technology},
  volume={33},
  number={5},
  pages={1601--1615},
  year={2025}
}

@inproceedings{park2026beyond,
  author={Park, Hun Kuk and Kim, Taekyung and Panagou, Dimitra},
  title={Beyond Collision Cones: Dynamic Obstacle Avoidance for Nonholonomic Robots via Dynamic Parabolic Control Barrier Functions},
  booktitle={2026 IEEE International Conference on Robotics and Automation (ICRA)},
  year={2026}
}

@inproceedings{breeden2023compositions,
  title={Compositions of multiple control barrier functions under input constraints},
  author={Breeden, Joseph and Panagou, Dimitra},
  booktitle={2023 American Control Conference (ACC)},
  pages={3688--3695},
  year={2023},
  organization={IEEE}
}

@article{ong2025combinatorial,
  title={Combinatorial control barrier functions: Nested boolean and p-choose-r compositions of safety constraints},
  author={Ong, Pio and Lee, Haejoon and Molnar, Tamas G and Panagou, Dimitra and Ames, Aaron D},
  journal={IEEE Control Systems Letters},
  volume={9},
  pages={2705--2710},
  year={2025},
  publisher={IEEE}
}

@inproceedings{polack2017kinematic,
  author    = {Polack, Philip and Altch{\'e}, Florent and d'Andr{\'e}a-Novel, Brigitte and de La Fortelle, Arnaud},
  title     = {The kinematic bicycle model: A consistent model for planning feasible trajectories for autonomous vehicles?},
  booktitle = {IEEE Intelligent Vehicles Symposium (IV)},
  year      = {2017},
  pages     = {812--818}
}

\appendices

\section{Rates of the Relative Motion and of an Edge}
\label{app:rate}
 
This appendix derives the rates used in the paper: $\dot\rho$ and $\dot d$, the rate $\dot h_\sigma$ of an edge in \eqref{eq:hdot}, and the rate of $\Delta v_\parallel$, which the contact constraints of \appref{app:contact} use. 
 
\emph{Velocity along the heading.} Since $\bm{v} {=} v\,(\cos\theta, \sin\theta)^\top$, the product rule gives $\dot{\bm{v}} {=} \dot v\,(\cos\theta, \sin\theta)^\top {+} v\dot\theta\,(-\sin\theta, \cos\theta)^\top$. We substitute $\dot v = a$ and $\dot\theta = v\beta/\ell_r$ from \eqref{eq:fg} to get
\begin{equation}
\dot{\bm{v}} = a\,(\cos\theta, \sin\theta)^\top + \tfrac{v^2}{\ell_r}\beta\,(-\sin\theta, \cos\theta)^\top ,
\label{eq:vdot}
\end{equation}
and since $\bm{v}_o$ is constant, $\tfrac{d}{dt}\Delta\bm{v} = \dot{\bm{v}}$.
 
\emph{Distance and line of sight.} We use two identities: for every angle $\phi$,
\[
\begin{aligned}
\mathbf{R}(-\gamma)(\cos\phi, \sin\phi)^\top &= (\cos(\phi - \gamma), \sin(\phi - \gamma))^\top ,\\
\mathbf{R}(-\gamma)(-\sin\phi, \cos\phi)^\top &= (-\sin(\phi - \gamma), \cos(\phi - \gamma))^\top .
\end{aligned}
\]
We write $\tfrac{d}{dt}\Delta\bm{p}$ in two ways and rotate both into the line-of-sight frame. First, the first two rows of \eqref{eq:fg} give $\dot{\bm{p}} = \bm{v} + v\beta(-\sin\theta, \cos\theta)^\top$, and $\dot{\bm{p}}_o = \bm{v}_o$, so $\tfrac{d}{dt}\Delta\bm{p} = -\Delta\bm{v} - v\beta(-\sin\theta, \cos\theta)^\top$. By \eqref{eq:los} and the second identity with $\phi = \theta$,
\[
\mathbf{R}(-\gamma)\tfrac{d}{dt}\Delta\bm{p} = \begin{bmatrix} -\Delta v_\parallel + v\beta\sin\tilde\theta \\ -\Delta v_\perp - v\beta\cos\tilde\theta \end{bmatrix} .
\]
Second, $\Delta\bm{p} = \rho\,(\cos\gamma, \sin\gamma)^\top$ gives $\tfrac{d}{dt}\Delta\bm{p} = \dot\rho\,(\cos\gamma, \sin\gamma)^\top + \rho\dot\gamma\,(-\sin\gamma, \cos\gamma)^\top$, and both identities with $\phi = \gamma$ give
\[
\mathbf{R}(-\gamma)\tfrac{d}{dt}\Delta\bm{p} = \begin{bmatrix} \dot\rho \\ \rho\dot\gamma \end{bmatrix} .
\]
Equating the two expressions row by row gives
\begin{equation}
\dot\rho = -\Delta v_\parallel + v\beta\sin\tilde\theta, \qquad \rho\dot\gamma = -\Delta v_\perp - v\beta\cos\tilde\theta .
\label{eq:rhodot}
\end{equation}
For $\rho {>} r$, taking the derivative of $d^2 {=} \rho^2 {-} r^2$ gives $\dot d {=} \rho\dot\rho/d$.
 
\emph{Rates of $\Delta v_\parallel$ and $\Delta v_\perp$.} By the product rule, $\tfrac{d}{dt}\bigl(\mathbf{R}(-\gamma)\Delta\bm{v}\bigr) = \mathbf{R}(-\gamma)\dot{\bm{v}} + \bigl(\tfrac{d}{dt}\mathbf{R}(-\gamma)\bigr)\Delta\bm{v}$. By the two identities with $\phi = \theta$, the first term is \eqref{eq:vdot} with $\theta$ replaced by $\tilde\theta$. For the second, differentiating $\mathbf{R}(-\gamma) = \bigl[\begin{smallmatrix} \cos\gamma & \sin\gamma \\ -\sin\gamma & \cos\gamma \end{smallmatrix}\bigr]$ gives $\tfrac{d}{dt}\mathbf{R}(-\gamma) = \dot\gamma\bigl[\begin{smallmatrix} -\sin\gamma & \cos\gamma \\ -\cos\gamma & -\sin\gamma \end{smallmatrix}\bigr]$. By \eqref{eq:los}, the second term is therefore $\dot\gamma\,(\Delta v_\perp, -\Delta v_\parallel)^\top$. We substitute $\dot\gamma$ from \eqref{eq:rhodot} to get
\begin{align}
\tfrac{d}{dt}\Delta v_\parallel &= a\cos\tilde\theta - \tfrac{v^2}{\ell_r}\beta\sin\tilde\theta - \tfrac{\Delta v_\perp(\Delta v_\perp + v\beta\cos\tilde\theta)}{\rho}, \label{eq:ddvpar}\\
\tfrac{d}{dt}\Delta v_\perp &= a\sin\tilde\theta + \tfrac{v^2}{\ell_r}\beta\cos\tilde\theta + \tfrac{\Delta v_\parallel(\Delta v_\perp + v\beta\cos\tilde\theta)}{\rho} . \nonumber
\end{align}
 
\emph{Rate of an edge.} For $\rho > r$, taking the derivative of \eqref{eq:hsigma}, with $\dot v = a$, gives
\[
\begin{aligned}
\dot h_\sigma = {}& -r\tfrac{d}{dt}\Delta v_\parallel + \sigma\bigl(\dot d\,\Delta v_\perp + d\,\tfrac{d}{dt}\Delta v_\perp\bigr) \\
& + r\kappa p\, d^{\,p-1}\dot d\, v^2 + 2r\kappa d^{\,p} v\,a .
\end{aligned}
\]
Substituting the three rates above, using $\rho^2 - d^2 = r^2$, and collecting the terms without input and the coefficients of $a$ and $\beta$ gives $L_f h_\sigma$, $c^a_\sigma + S$ and $B_\sigma$ in \eqref{eq:hdot}.
 
\section{The Contact Constraints}
\label{app:contact}
 
At contact, $\rho = r$, the rate \eqref{eq:hdot} cannot be used, since $L_f h_\sigma$ contains $r\Delta v_\parallel/d$ and $d = 0$. This appendix derives the constraints that \eqref{eq:ours_qp} imposes instead, and shows that they are feasible at every state of $\mathcal{C}$ at contact.

At contact $d = 0$, so both edges are equal to $-r\Delta v_\parallel$. Therefore, $h = -r\Delta v_\parallel$. Since $h \ge 0$ on $\mathcal{C}$, a state of $\mathcal{C}$ at contact has $\Delta v_\parallel \le 0$. While contact lasts, two things must hold. The robot must not enter the obstacle, so we require $\dot\rho \ge 0$. The state must also stay in $\mathcal{C}$, which at contact means that $\Delta v_\parallel$ stays nonpositive, so we require $\tfrac{d}{dt}\Delta v_\parallel \le 0$. Writing both requirements with \eqref{eq:rhodot} and \eqref{eq:ddvpar} gives
\[
\begin{aligned}
v\beta\sin\tilde\theta &\ \ge\ \Delta v_\parallel ,\\
a\cos\tilde\theta - \tfrac{v^2}{\ell_r}\beta\sin\tilde\theta - \tfrac{v\Delta v_\perp}{\rho}\beta\cos\tilde\theta &\ \le\ \tfrac{\Delta v_\perp^2}{\rho} ,
\end{aligned}
\]
which are the contact constraints of \eqref{eq:ours_qp}. Both are affine in $\bm{u}$. The input $\bm{u} = 0$ satisfies both, since it gives $\dot\rho = -\Delta v_\parallel \ge 0$ and $\tfrac{d}{dt}\Delta v_\parallel = -\Delta v_\perp^2/\rho \le 0$. We impose them for $0 < \rho \le r$, not only at $\rho = r$, because the proof of \thmref{thm:safety} argues by contradiction on an interval on which $\rho < r$.
 
\section{Proof of \propref{prop:A}}
\label{app:gain}
 
\propref{prop:A} asks for a pair $(\kappa, p)$ for which, for every extended class-$\mathcal{K}$ function $\alpha$, the constraint $\dot h_\sigma \ge -\alpha(h_\sigma)$ can be met at every state of $\mathcal{C}^\sigma$ with $\rho > r$. On $\mathcal{C}^\sigma$, $h_\sigma \ge 0$ and so $-\alpha(h_\sigma) \le 0$. It is therefore enough to show that some input makes $\dot h_\sigma$ nonnegative:
\begin{equation}
\sup_{\bm{u} \in \mathcal{U}} \dot h_\sigma(x, \bm{u}) \ \ge\ 0 \qquad \text{on } \mathcal{C}^\sigma \cap \{\rho > r\} .
\label{eq:key}
\end{equation}
Since $\dot h_\sigma$ in \eqref{eq:hdot} is affine in $\bm{u}$, its largest value is reached with each input at its bound, with the sign of its coefficient:
\begin{equation}
\sup_{\bm{u} \in \mathcal{U}} \dot h_\sigma = L_f h_\sigma + \Phi_\sigma,
\quad
\Phi_\sigma := a_{\max}\bigl|c^a_\sigma + S\bigr| + \beta_{\max}\bigl|B_\sigma\bigr| . \label{eq:sep}
\end{equation}
We call $L_f h_\sigma$ the \emph{drift}, since it does not depend on the input, and $\Phi_\sigma$ the \emph{authority}, the largest increase that acceleration and steering can add. The drift can be negative, and the proof shows that the authority always makes up for it.
 
If $\Delta v_\parallel \le 0$, both terms of $L_f h_\sigma$ in \eqref{eq:hdot} are nonnegative on $\mathcal{C}^\sigma$, so $\bm{u} = 0$ satisfies \eqref{eq:key}. The rest of this appendix takes $\Delta v_\parallel > 0$ and proceeds in three steps. \appref{app:bounds} bounds the drift and the authority from below, both in proportion to $\rho v^2$, which reduces \eqref{eq:key} to a condition on the parameters alone. \appref{app:closedform} gives a gain that satisfies this condition in closed form, and \appref{app:propproof} completes the proof.
 
\subsection{Bounds on the Drift and the Authority}
\label{app:bounds}
 
We state the bounds for a general exponent $p \ge 1$. The proof of \propref{prop:A} uses them at $p = 1$, and \appref{app:certify} for a given pair $(\kappa, p)$.
 
Let $x \in \mathcal{C}^\sigma$ with $\rho > r$ and $\Delta v_\parallel > 0$, and write $\zeta := \sigma\Delta v_\perp$, so that $\zeta \ge 0$ by the definition of $\mathcal{C}^\sigma$, and $\zeta \le \|\Delta\bm{v}\| \le w_{\max}$. The exponent enters through $d^{\,p-1}$, which we bound on the range of distances considered by two constants, $\bar D \ge d^{\,p-1}$ and $M_p \ge d^{\,p-1}(p + r^2/\rho^2)$. At $p = 1$ these are $\bar D = 1$ and $M_1 = 2$, for every $\rho > r$. For $p > 1$, $d^{\,p-1}$ grows without bound with $\rho$, so the range must be bounded, and \appref{app:certify} gives the constants for $p > 1$. The following lemma bounds the drift below by \(-\rho v^2\nu\), where
\[
\nu := \kappa w_{\max} M_p + r\kappa^2 p\bar D^2 v_{\max}^2.
\]
 
\begin{lemma}
\label{lem:drift}
The drift satisfies
\begin{equation}
\begin{aligned}
L_f h_\sigma &\ \ge\ -\kappa d^{\,p-1} v^2 \frac{p\rho^2 + r^2}{\rho}\,\zeta \;-\; r\kappa^2 p\, d^{\,2p-2}\rho v^4 \\
&\ \ge\ -\rho v^2 \nu .
\end{aligned}
\label{eq:driftbound}
\end{equation}
\end{lemma}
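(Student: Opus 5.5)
The plan is a direct computation starting from the expression for $L_f h_\sigma$ in \eqref{eq:hdot}. The only nontrivial input is the constraint $h_\sigma \ge 0$, which holds on $\mathcal{C}^\sigma$ and lets us bound the closing rate $\Delta v_\parallel$ from above. Write $D := d^{\,p-1}$. With $\zeta = \sigma\Delta v_\perp$ we have $\Delta v_\perp^2 = \zeta^2$ and $\sigma r\Delta v_\perp = r\zeta$, so \eqref{eq:hdot} reads
\[
L_f h_\sigma = \frac{r\zeta^2}{\rho} - \frac{r\Delta v_\parallel}{d}\Bigl(\frac{r\zeta}{\rho} + \kappa p D \rho v^2\Bigr).
\]
Since $\zeta \ge 0$, $\kappa \ge 0$, and $\Delta v_\parallel > 0$ is assumed in this part of the appendix, both the prefactor $r\Delta v_\parallel/d$ and the bracket are nonnegative. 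The subtracted term therefore becomes more negative as $\Delta v_\parallel$ grows, so an upper bound on $\Delta v_\parallel$ gives a lower bound on $L_f h_\sigma$.

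That upper bound comes from \eqref{eq:hsigma}. The condition $h_\sigma \ge 0$ is $r\Delta v_\parallel \le d\zeta + r\kappa d^{\,p} v^2$. Dividing by $d > 0$ gives
\[
\frac{r\Delta v_\parallel}{d} \le \zeta + r\kappa D v^2 .
\]
Substituting this and expanding the product gives four terms:
\[
\Bigl(\zeta + r\kappa D v^2\Bigr)\Bigl(\frac{r\zeta}{\rho} + \kappa p D\rho v^2\Bigr) = \frac{r\zeta^2}{\rho} + \kappa p D\rho v^2\zeta + \frac{r^2\kappa D v^2\zeta}{\rho} + r\kappa^2 p D^2\rho v^4 .
\]
The first term cancels the positive drift term $r\zeta^2/\rho$. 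The second and third combine to $\kappa D v^2\zeta\,(p\rho^2 + r^2)/\rho$. This is exactly the first inequality of \eqref{eq:driftbound}. This cancellation is the crux of the proof: the possibly large quadratic term $r\zeta^2/\rho$ disappears, and every remaining term carries a factor $\kappa$, so the drift deficit vanishes as $\kappa \to 0$.

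For the second inequality, factor $(p\rho^2 + r^2)/\rho = \rho\,(p + r^2/\rho^2)$. The first term is then bounded using $D\,(p + r^2/\rho^2) \le M_p$ and $\zeta \le \|\Delta\bm{v}\| \le w_{\max}$, which gives at least $-\rho v^2\kappa w_{\max} M_p$. For the second term, use $D^2 = d^{\,2p-2} \le \bar D^2$ and $v^4 = v^2\cdot v^2 \le v^2 v_{\max}^2$, using $0 \le v \le v_{\max}$. This gives at least $-\rho v^2\, r\kappa^2 p\bar D^2 v_{\max}^2$. Adding the two bounds yields $-\rho v^2\nu$.

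I expect no real obstacle. The only points to check are the sign conditions that justify replacing $\Delta v_\parallel$ by its upper bound ($\Delta v_\parallel > 0$ and a nonnegative bracket) and that the constants $\bar D$ and $M_p$ are valid on the domain considered. At $p = 1$ these are $D = 1$ and $1 + r^2/\rho^2 < 2$, consistent with $\bar D = 1$ and $M_1 = 2$.
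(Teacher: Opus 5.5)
Your proof is correct and follows the paper's argument step for step. You rewrite the drift in terms of $\zeta$, use $h_\sigma \ge 0$ to bound $r\Delta v_\parallel/d$ from above, and multiply by the nonnegative bracket so that $r\zeta^2/\rho$ cancels; you then bound the result using $\zeta \le w_{\max}$, $M_p$, $\bar D$ and $v \le v_{\max}$. A minor remark: $\Delta v_\parallel > 0$ is not needed for the first inequality, since the nonnegativity of the bracket alone justifies replacing $r\Delta v_\parallel/d$ by its upper bound.
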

 
\begin{proof}
By \eqref{eq:hdot}, $L_f h_\sigma = \tfrac{r\zeta^2}{\rho} - \tfrac{r\Delta v_\parallel}{d}\bigl(\tfrac{r\zeta}{\rho} + \kappa p\, d^{\,p-1}\rho v^2\bigr)$, and the factor in brackets is nonnegative since $\zeta \ge 0$. We write $h_\sigma \ge 0$ as $-r\Delta v_\parallel + d\zeta + r\kappa d^{\,p} v^2 \ge 0$ and divide by $d > 0$:
\begin{equation}
\frac{r\Delta v_\parallel}{d} \ \le\ \zeta + r\kappa d^{\,p-1} v^2 .
\label{eq:cancellation}
\end{equation}
We multiply \eqref{eq:cancellation} by $\bigl(\tfrac{r\zeta}{\rho} + \kappa p\, d^{\,p-1}\rho v^2\bigr) \ge 0$, and expand the right side:
\[
\begin{aligned}
&(\zeta + r\kappa d^{\,p-1} v^2)\Bigl(\tfrac{r\zeta}{\rho} + \kappa p\, d^{\,p-1}\rho v^2\Bigr) \\
&\quad = \tfrac{r\zeta^2}{\rho} + \kappa d^{\,p-1} v^2\,\tfrac{p\rho^2+r^2}{\rho}\,\zeta + r\kappa^2 p\, d^{\,2p-2}\rho v^4 .
\end{aligned}
\]
Substituting this upper bound into the expression for $L_f h_\sigma$, the term $r\zeta^2/\rho$ cancels, which gives the first inequality of \eqref{eq:driftbound}. For the second, $\tfrac{p\rho^2 + r^2}{\rho} = \rho\,(p + r^2/\rho^2)$, and we use $\zeta \le w_{\max}$, $d^{\,p-1}(p + r^2/\rho^2) \le M_p$, $d^{\,p-1} \le \bar D$ and $v \le v_{\max}$.
\end{proof}
 
\emph{The authority.} The following lemma gives a lower bound on the authority \(\Phi_\sigma\), again proportional to \(\rho v^2\), so that it can be compared directly with the drift bound.
 
\begin{lemma}
\label{lem:auth}
Let \(\psi:=\arccos(r/\rho)\), \(\varphi:=\psi+\sigma\tilde\theta\), \(\varepsilon:=\nu/a_{\max}\), and \(\xi:=\varepsilon v_{\max}^2+2r\kappa\bar Dv_{\max}\). The coefficients satisfy $c^a_\sigma = -\rho\cos\varphi$ and $B_\sigma = \sigma v\,(Q\sin\varphi - P\cos\varphi)$, where
\begin{equation}
\begin{aligned}
P &:= \tfrac{d(d\zeta - r\Delta v_\parallel)}{\rho^2} + r\kappa p\, d^{\,p-1} v^2 ,\\
Q &:= \tfrac{v\rho}{\ell_r} + \tfrac{d(r\zeta + d\Delta v_\parallel)}{\rho^2} + \tfrac{r\zeta}{d} + r^2\kappa p\, d^{\,p-2} v^2 .
\end{aligned}
\label{eq:PQ}
\end{equation}
If $\xi < 1$, then
\begin{equation}
\begin{aligned}
\Phi_\sigma &\ \ge\ \rho v^2 \min\bigl(\nu,\ \beta_{\max}\Lambda\bigr), \quad \text{where}\\
\Lambda &:= \tfrac{\sqrt{1 - \xi^2}}{\ell_r} - \tfrac{1}{r}\bigl(w_{\max} + r\kappa p\bar D v_{\max}^2\bigr)\bigl(\varepsilon v_{\max} + 2r\kappa\bar D\bigr) .
\end{aligned}
\label{eq:authmin}
\end{equation}
\end{lemma}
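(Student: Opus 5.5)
The plan has two parts: first the algebraic identities for $c^a_\sigma$ and $B_\sigma$, then a case split on $|\cos\varphi|$ for the bound \eqref{eq:authmin}.

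\emph{Identities.} Since $\psi=\arccos(r/\rho)\in[0,\pi/2)$, we have $\cos\psi=r/\rho$ and $\sin\psi=d/\rho$. Write $\sigma\tilde\theta=\varphi-\psi$, so that
\[
\sin\tilde\theta=\sigma\,\tfrac{r\sin\varphi-d\cos\varphi}{\rho},\qquad \cos\tilde\theta=\tfrac{r\cos\varphi+d\sin\varphi}{\rho}.
\]
Substituting into $c^a_\sigma$ and $c^\beta_\sigma$ from \eqref{eq:hdot} and using $\rho^2=r^2+d^2$ gives $c^a_\sigma=-\rho\cos\varphi$ and $c^\beta_\sigma=\sigma\rho\sin\varphi$. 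Next, substitute the same expressions into the two remaining terms of $B_\sigma$, with $\Delta v_\perp=\sigma\zeta$ and $\sigma^2=1$, and factor out $\sigma v$. Collecting the coefficients of $\sin\varphi$ and $\cos\varphi$ should produce exactly $Q$ and $-P$ in \eqref{eq:PQ}. The $\tfrac{v^2}{\ell_r}c^\beta_\sigma$ term supplies $v\rho/\ell_r$ in $Q$. The $\tfrac{v\rho}{d}(\cdot)\sin\tilde\theta$ term supplies $r\zeta/d+r^2\kappa p d^{p-2}v^2$ in $Q$ and $-(\zeta d/\rho\cdot\ldots)$ pieces in $P$. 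This step is routine bookkeeping.

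\emph{Case $|\cos\varphi|\ge \eta$,} where $\eta:=\varepsilon v^2+2r\kappa d^{\,p-1}v\le\xi$. Using $d^{\,p}\le\rho\,d^{\,p-1}$,
\[
a_{\max}|c^a_\sigma+S|\ge a_{\max}\bigl(\rho|\cos\varphi|-2r\kappa d^{\,p}v\bigr)\ge a_{\max}\rho\bigl(|\cos\varphi|-2r\kappa d^{\,p-1}v\bigr)\ge a_{\max}\rho\,\varepsilon v^2=\rho v^2\nu .
\]
Hence $\Phi_\sigma\ge\rho v^2\nu$ in this case.

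\emph{Case $|\cos\varphi|<\eta$.} Here $|\sin\varphi|\ge\sqrt{1-\xi^2}$, which uses $\xi<1$. Then
\[
\beta_{\max}|B_\sigma|\ge\beta_{\max}v\bigl(Q|\sin\varphi|-|P||\cos\varphi|\bigr).
\]
Under $\Delta v_\parallel>0$ and $\zeta\ge0$, every term of $Q$ is nonnegative, so $Q\ge v\rho/\ell_r$. For $P$, Cauchy--Schwarz gives $|d\zeta-r\Delta v_\parallel|\le\rho\|\Delta\bm v\|\le\rho w_{\max}$, and $d/\rho\le1\le\rho/r$. Therefore
\[
|P|\le w_{\max}+r\kappa p\bar D v_{\max}^2\le\tfrac{\rho}{r}\bigl(w_{\max}+r\kappa p\bar Dv_{\max}^2\bigr).
\]
Combine this with $|\cos\varphi|<v(\varepsilon v_{\max}+2r\kappa\bar D)$, obtained from $v\le v_{\max}$ and $d^{\,p-1}\le\bar D$. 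This yields $|B_\sigma|\ge\rho v^2\Lambda$, so $\Phi_\sigma\ge\rho v^2\beta_{\max}\Lambda$.

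Taking the minimum over the two cases gives \eqref{eq:authmin}. I expect the main obstacle to be the trigonometric collection that identifies $P$ and $Q$ exactly. The key subtlety is choosing the threshold $\eta$ to carry a factor $v$, so that the $|P||\cos\varphi|$ term scales like $\rho v^2$ rather than $\rho v$. That scaling is what lets the bound be compared with the drift bound in \lemref{lem:drift}.
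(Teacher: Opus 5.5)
Your proposal is correct and follows the paper's proof: the same substitution $\sigma\tilde\theta=\varphi-\psi$ gives the identities, the same two-case split separates acceleration authority from steering authority, and the same bounds $Q\ge v\rho/\ell_r$, $|\cos\varphi|<v(\varepsilon v_{\max}+2r\kappa\bar D)$ and $\rho\ge r$ yield $\rho v^2\Lambda$. The differences are minor: you split on $|\cos\varphi|$ against the state-dependent threshold $\eta$ rather than on $|c^a_\sigma+S|\ge\varepsilon\rho v^2$, and you bound $|P|$ by Cauchy--Schwarz, whereas the paper uses $h_\sigma\ge0$ for $P\ge0$ and $\Delta v_\parallel>0$ for $P\le\zeta+r\kappa p\,d^{\,p-1}v^2$; this gives the same constant without needing the sign of $P$.
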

 
\begin{proof}
Since $r^2 + d^2 = \rho^2$, we have $r = \rho\cos\psi$ and $d = \rho\sin\psi$. We substitute $\sin\tilde\theta = \sigma\sin(\varphi - \psi)$ and $\cos\tilde\theta = \cos(\varphi - \psi)$ into $c^a_\sigma$ and $B_\sigma$ of \eqref{eq:hdot}, expand, and collect the coefficients of $\sin\varphi$ and $\cos\varphi$, which gives the two identities. On $\mathcal{C}^\sigma$, $h_\sigma \ge 0$ gives $d\zeta - r\Delta v_\parallel \ge -r\kappa d^{\,p} v^2$, so $P \ge r\kappa d^{\,p-1}v^2(p - d^2/\rho^2) \ge 0$. Since $\Delta v_\parallel > 0$, dropping $-rd\Delta v_\parallel/\rho^2$ gives $P \le \zeta + r\kappa p\, d^{\,p-1}v^2 \le w_{\max} + r\kappa p\bar D v_{\max}^2$, and every term of $Q$ is nonnegative, so $Q \ge v\rho/\ell_r$. We consider two cases.
 
If $|c^a_\sigma + S| \ge \varepsilon\rho v^2$, then $\Phi_\sigma \ge a_{\max}\varepsilon\rho v^2 = \rho v^2\nu$.
 
Otherwise, since $S = 2r\kappa d^{\,p-1} d\, v \le 2r\kappa\bar D\rho v$, we have $\rho|\cos\varphi| \le |c^a_\sigma + S| + S < \varepsilon\rho v^2 + 2r\kappa\bar D\rho v$, so $|\cos\varphi| < \xi$ and $|\sin\varphi| > \sqrt{1 - \xi^2}$. Then
\[
\begin{aligned}
|B_\sigma| &\ \ge\ v\bigl(Q|\sin\varphi| - P|\cos\varphi|\bigr) \\
&\ \ge\ v^2\Bigl(\tfrac{\rho\sqrt{1 - \xi^2}}{\ell_r} - P(\varepsilon v + 2r\kappa\bar D)\Bigr) ,
\end{aligned}
\]
and using the upper bound on $P$, $v \le v_{\max}$ and $\rho \ge r$ gives $|B_\sigma| \ge \rho v^2\Lambda$, so $\Phi_\sigma \ge \beta_{\max}\rho v^2\Lambda$.
\end{proof}
 
\emph{The resulting condition.} Adding the bounds of \lemref{lem:drift} and \lemref{lem:auth} gives $\sup_{\bm{u}}\dot h_\sigma \ge \rho v^2[\min(\nu, \beta_{\max}\Lambda) - \nu]$. So \eqref{eq:key} holds at every state of $\mathcal{C}^\sigma$ with $\rho > r$ in the range of distances considered if
\begin{equation}
\textbf{(C$\kappa$)} \qquad \xi < 1 \quad \text{and} \quad \beta_{\max}\Lambda \ \ge\ \nu .
\label{eq:G}
\end{equation}
This condition involves only the parameters, not the state. Every term of the two bounds is proportional to $v^2$, so no lower bound on the speed is needed, and $v = 0$ is included.
 
\subsection{A Closed-Form Gain}
\label{app:closedform}
 
At $p = 1$, \eqref{eq:G} depends on $\kappa$ through $\nu$, $\varepsilon$ and $\xi$. Bounding each of them linearly in $\kappa$ gives an explicit gain.
 
\begin{lemma}
\label{lem:threshold}
Let $p = 1$, so that $\bar D = 1$, $M_1 = 2$ and $\nu = 2\kappa w_{\max} + r\kappa^2 v_{\max}^2$. Let
\begin{equation}
\begin{aligned}
g_1 &:= \tfrac{3 w_{\max} v_{\max}^2}{a_{\max}} + 2 r v_{\max} ,\\
g_2 &:= 2 w_{\max}\Bigl(\tfrac{3 w_{\max} v_{\max}}{a_{\max}} + 2r\Bigr) ,\\
\kappa^\star &:= \min\Bigl(\tfrac{w_{\max}}{r v_{\max}^2},\ \Bigl(g_1 + \tfrac{\ell_r g_2}{r} + \tfrac{3\ell_r w_{\max}}{\beta_{\max}}\Bigr)^{-1}\Bigr) .
\end{aligned}
\label{eq:kappastar}
\end{equation}
Then $\kappa^\star > 0$, and \eqref{eq:G} holds for every $0 < \kappa \le \kappa^\star$.
\end{lemma}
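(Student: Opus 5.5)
The plan is to bound $\nu$, $\varepsilon$, $\xi$ and the subtracted part of $\Lambda$ linearly in $\kappa$, using only the first entry of the minimum in \eqref{eq:kappastar}. Condition \eqref{eq:G} then reduces to a single linear inequality in $\kappa$, which the second entry of the minimum is built to satisfy. Positivity of $\kappa^\star$ is immediate: $r, v_{\max}, a_{\max}, \beta_{\max}, \ell_r > 0$ and $w_{\max} \ge v_{\max} > 0$, so $g_1, g_2 > 0$ and both entries of the minimum are finite and positive.

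\emph{Linear bounds.} Fix $0 < \kappa \le \kappa^\star$. From $\kappa \le w_{\max}/(r v_{\max}^2)$ we get $r\kappa v_{\max}^2 \le w_{\max}$. Hence
\[
\nu = 2\kappa w_{\max} + r\kappa^2 v_{\max}^2 \le 3\kappa w_{\max}, \qquad \varepsilon = \nu/a_{\max} \le 3\kappa w_{\max}/a_{\max}.
\]
Consequently $\xi = \varepsilon v_{\max}^2 + 2r\kappa v_{\max} \le \kappa g_1$. In $\Lambda$, the first factor of the subtracted product satisfies $w_{\max} + r\kappa v_{\max}^2 \le 2w_{\max}$. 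The second satisfies $\varepsilon v_{\max} + 2r\kappa \le \kappa\bigl(3w_{\max}v_{\max}/a_{\max} + 2r\bigr)$. So the subtracted term is at most $\kappa g_2/r$.

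\emph{Checking \eqref{eq:G}.} The second entry of the minimum gives $\kappa g_1 \le \kappa\bigl(g_1 + \ell_r g_2/r + 3\ell_r w_{\max}/\beta_{\max}\bigr) \le 1$. The inequality between $g_1$ and the full sum is strict because the extra terms are positive, so $\xi \le \kappa g_1 < 1$ and the first half of \eqref{eq:G} holds. For $\xi \in [0,1)$ we have $\sqrt{1-\xi^2} \ge 1-\xi \ge 1 - \kappa g_1$. Therefore
\[
\beta_{\max}\Lambda \ \ge\ \beta_{\max}\Bigl(\tfrac{1-\kappa g_1}{\ell_r} - \tfrac{\kappa g_2}{r}\Bigr).
\]
It then suffices to show that this lower bound is at least $3\kappa w_{\max} \ge \nu$. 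Multiplying by $\ell_r/\beta_{\max}$ turns this into
\[
1 \ \ge\ \kappa\Bigl(g_1 + \tfrac{\ell_r g_2}{r} + \tfrac{3\ell_r w_{\max}}{\beta_{\max}}\Bigr),
\]
which is exactly the second entry of the minimum in \eqref{eq:kappastar}.

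The only delicate step is replacing $\sqrt{1-\xi^2}$ by a quantity linear in $\kappa$. The bound $\sqrt{1-\xi^2} \ge 1-\xi$ holds because $1-\xi^2 = (1-\xi)(1+\xi) \ge (1-\xi)^2$ for $0 \le \xi \le 1$. The rest is bookkeeping: each occurrence of $r\kappa v_{\max}^2$ must be absorbed by $w_{\max}$ before the linear bounds are applied, and this is why the first entry of the minimum appears.
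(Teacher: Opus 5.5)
Your proof is correct and follows the paper's argument essentially step for step. The first entry of the minimum absorbs $r\kappa v_{\max}^2$ into $w_{\max}$, which gives the linear bounds $\nu \le 3\kappa w_{\max}$, $\xi \le \kappa g_1$, and a bound of $\kappa g_2/r$ on the subtracted term of $\Lambda$; the second entry, combined with $\sqrt{1-\xi^2}\ge 1-\xi$, then yields both $\xi<1$ and $\beta_{\max}\Lambda \ge \nu$.
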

 
\begin{proof}
Every term in \eqref{eq:kappastar} is positive, so $\kappa^\star > 0$. Let $0 < \kappa \le \kappa^\star$. Then $\kappa$ is at most each of the two terms of the minimum in \eqref{eq:kappastar}, so
\[
\begin{aligned}
r\kappa v_{\max}^2 &\le w_{\max} ,\\
\kappa\Bigl(g_1 + \tfrac{\ell_r g_2}{r} + \tfrac{3\ell_r w_{\max}}{\beta_{\max}}\Bigr) &\le 1 .
\end{aligned}
\]
The first inequality gives $\nu \le 3 w_{\max}\kappa$ and $w_{\max} + r\kappa v_{\max}^2 \le 2 w_{\max}$. So $\varepsilon \le 3 w_{\max}\kappa/a_{\max}$, which gives $\xi \le \kappa g_1$ and $(w_{\max} + r\kappa v_{\max}^2)(\varepsilon v_{\max} + 2r\kappa) \le \kappa g_2$. In the second inequality, the last two terms in the bracket are positive, so $\kappa g_1 < 1$, and hence $\xi < 1$. It remains to show $\beta_{\max}\Lambda \ge \nu$. At $p = 1$,
\[
\Lambda = \tfrac{\sqrt{1 - \xi^2}}{\ell_r} - \tfrac{1}{r}\bigl(w_{\max} + r\kappa v_{\max}^2\bigr)\bigl(\varepsilon v_{\max} + 2r\kappa\bigr) .
\]
Since $\sqrt{1 - x^2} \ge 1 - x$ for $x \in [0, 1]$, $\xi \le \kappa g_1$ gives $\sqrt{1 - \xi^2} \ge 1 - \kappa g_1$. The second term is at most $\kappa g_2/r$ by the bound above. Hence $\Lambda \ge (1 - \kappa g_1)/\ell_r - \kappa g_2/r$, and with $\nu \le 3w_{\max}\kappa$,
\[
\begin{aligned}
\beta_{\max}\Lambda - \nu &\ \ge\ \beta_{\max}\Bigl(\tfrac{1 - \kappa g_1}{\ell_r} - \tfrac{\kappa g_2}{r}\Bigr) - 3w_{\max}\kappa \\
&\ =\ \tfrac{\beta_{\max}}{\ell_r}\Bigl[1 - \kappa\Bigl(g_1 + \tfrac{\ell_r g_2}{r} + \tfrac{3\ell_r w_{\max}}{\beta_{\max}}\Bigr)\Bigr] ,
\end{aligned}
\]
where the equality takes out the factor $\beta_{\max}/\ell_r$. The bracket is nonnegative by the second inequality above.
\end{proof}

\subsection{Completing the Proof}
\label{app:propproof}
 
Take $p = 1$ and $0 < \kappa \le \kappa^\star$, which is possible because $\kappa^\star > 0$. By \lemref{lem:threshold}, \eqref{eq:G} holds, so \eqref{eq:key} holds at every state of $\mathcal{C}^\sigma$ with $\rho > r$. Since $h_\sigma \ge 0$ on $\mathcal{C}^\sigma$ and $\alpha$ is extended class-$\mathcal{K}$, $\alpha(h_\sigma) \ge 0$, so $\sup_{\bm{u}}\dot h_\sigma \ge 0 \ge -\alpha(h_\sigma)$. At a state of $\mathcal{C}$ with $\rho > r$, the edge selected by \eqref{eq:ours_qp} is the one with the larger $h_\sigma$, so the state lies in $\mathcal{C}^\sigma$ for that edge, and \eqref{eq:ours_qp} is feasible there. \hfill$\blacksquare$
 
\section{Proof of \thmref{thm:safety}}
\label{app:safety}
 
We prove a stronger statement: the theorem holds for every absolutely continuous trajectory whose input satisfies the constraints of \eqref{eq:ours_qp} at almost every time. We show that the state stays in $\mathcal{C}$, that is, $\rho \ge r$ and $h \ge 0$ at every time. Since $h$ contains $d = \sqrt{\rho^2 - r^2}$, it is defined only where $\rho \ge r$, so $h \ge 0$ at every time already includes $\rho \ge r$. The barrier constraint, however, controls $h$ only where $h$ is defined, and cannot alone prevent entry into the obstacle: at contact, $h \ge 0$ gives only $\Delta v_\parallel \le 0$, which by \eqref{eq:rhodot} does not prevent $\dot\rho < 0$. We therefore first show $\rho \ge r$ from the contact constraints (Step 1), and then show that $h$ stays nonnegative by combining the barrier constraint, which holds away from contact, with the contact constraints, which hold at contact (Step 2). Away from contact, $h = \max(h_{+1}, h_{-1})$ is not differentiable where the two edges are equal, so we first need a lemma on the maximum of two functions. At contact, the edges contain $d = \sqrt{\rho^2 - r^2}$, which is not differentiable at $\rho = r$, so there we work instead with $-r\Delta v_\parallel$, which equals $h$ at contact.
 
\begin{lemma}
\label{lem:maxac}
Let $f_1, f_2 : [0,T] \to \mathbb{R}$ be absolutely continuous and $F = \max(f_1, f_2)$. Then $F$ is absolutely continuous, and at almost every $t$, $\dot F(t) = \dot f_i(t)$ for every $i$ with $f_i(t) = F(t)$.
\end{lemma}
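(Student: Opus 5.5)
The plan is to write $F$ in closed form and reduce both claims to facts about absolutely continuous functions. We use the identity
\[
F = \max(f_1, f_2) = \tfrac12\bigl(f_1 + f_2 + |f_1 - f_2|\bigr).
\]
This reduces absolute continuity of $F$ to that of $|g|$ with $g := f_1 - f_2$. Since $g$ is absolutely continuous, for every finite family of disjoint intervals $(s_k, t_k)$ we have $\sum_k \bigl||g(t_k)| - |g(s_k)|\bigr| \le \sum_k |g(t_k) - g(s_k)|$. The $\varepsilon$--$\delta$ definition therefore transfers directly from $g$ to $|g|$. Sums of absolutely continuous functions are absolutely continuous, so $F$ is too.

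For the derivative claim, we fix the full-measure set $E \subseteq [0,T]$ on which $f_1$, $f_2$ and $F$ are all differentiable. Such a set exists because absolutely continuous functions are differentiable almost everywhere, and a finite union of null sets is null. Take $t \in E$ and an index $i$ with $f_i(t) = F(t)$. We split into two cases.

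If $f_1(t) \ne f_2(t)$, say $f_1(t) > f_2(t)$, then by continuity $F = f_1$ on a neighborhood of $t$. Hence $\dot F(t) = \dot f_1(t)$, and $i = 1$ is the only index attaining the maximum.

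If $f_1(t) = f_2(t)$, we use that $F - f_j \ge 0$ everywhere, with $(F - f_j)(t) = 0$, for $j = 1, 2$. So $t$ is a minimum of the function $F - f_j$, which is differentiable at $t$. Its derivative there must vanish, giving $\dot F(t) = \dot f_j(t)$ for both $j$. Concretely, the one-sided difference quotients are $\ge 0$ from the right and $\le 0$ from the left, and both converge to $\dot F(t) - \dot f_j(t)$.

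This covers every $t \in E$, which proves the claim. The only delicate point is the tie set $\{f_1 = f_2\}$, which may have positive measure. The minimum argument handles it without any measure-theoretic decomposition. The one thing to be careful about is requiring differentiability of $F$ itself at $t$, which is why $E$ is defined to include it.
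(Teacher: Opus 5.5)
Your proof is correct and follows the paper's argument. It uses the same identity $F = \tfrac12(f_1 + f_2 + |f_1 - f_2|)$ for absolute continuity, the same case split at a point where $f_1$, $f_2$ and $F$ are all differentiable, and in the tie case the same one-sided difference-quotient comparison: the paper takes right and left limits of $\max_i\{(f_i(s) - f_i(t))/(s-t)\}$ to get $\dot F(t) = \max(\dot f_1(t), \dot f_2(t)) = \min(\dot f_1(t), \dot f_2(t))$, while you apply Fermat's rule to $F - f_j \ge 0$, which is the same mechanism in different packaging, and you additionally check the absolute continuity of $|f_1 - f_2|$ that the paper only asserts.
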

 
\begin{proof}
We write $F = \tfrac12(f_1 + f_2 + |f_1 - f_2|)$. Sums and absolute values of absolutely continuous functions are absolutely continuous, so $F$ is. Then $f_1, f_2$ and $F$ are differentiable at almost every $t$. Fix such a $t$. If $f_1(t) > f_2(t)$, then $F = f_1$ near $t$ and the claim follows, and the same holds if $f_2(t) > f_1(t)$. If $f_1(t) = f_2(t)$, then $F(s) - F(t) = \max_i\{f_i(s) - f_i(t)\}$ for every $s$. We divide by $s - t$ and let $s \downarrow t$ to get $\dot F(t) = \max(\dot f_1(t), \dot f_2(t))$. If we let $s \uparrow t$ instead, $s - t$ is negative, so dividing turns the maximum into a minimum, and we get $\dot F(t) = \min(\dot f_1(t), \dot f_2(t))$. Since $F$ is differentiable at $t$, the two limits are equal, so $\dot f_1(t) = \dot f_2(t) = \dot F(t)$.
\end{proof}
 
\emph{Proof of the theorem.} Let $x(\cdot)$, defined on $[0, T]$, be such a trajectory of the dynamics of \secref{subsec:bicycle}, and write $h(t)$ for $h(x(t))$. On every compact interval on which $\rho > 0$, $\rho$ and $\Delta v_\parallel$ are absolutely continuous, with derivatives \eqref{eq:rhodot} and \eqref{eq:ddvpar} at almost every time. Each $h_\sigma$ is continuous on $\{\rho \ge r\}$, and at contact $d = 0$, so every edge equals $-r\Delta v_\parallel$ and
\begin{equation}
h = -r\Delta v_\parallel \qquad \text{wherever } \rho = r .
\label{eq:hcontact}
\end{equation}
 
\emph{Step 1: $\rho \ge r$.} The distance $\rho = \|\Delta\bm{p}\|$ is continuous, since $x$ is. Suppose $\rho(t^\ast) < r$ for some $t^\ast$. Since $x(0) \in \mathcal{C}$, $\rho(0) \ge r$, so $t_r := \sup\{t \le t^\ast : \rho(t) \ge r\}$ exists, and by continuity $\rho(t_r) = r$ and $\rho < r$ on $(t_r, t^\ast]$. Again by continuity, $\rho > 0$ on $[t_r, t_r + \delta]$ for some $\delta > 0$. We replace $t^\ast$ by $\min(t^\ast, t_r + \delta)$, which keeps $\rho(t^\ast) < r$, so that $0 < \rho \le r$ on $[t_r, t^\ast]$. On this interval the input satisfies the contact constraints of \eqref{eq:ours_qp}, so $\dot\rho \ge 0$ at almost every time. Since $\rho$ is absolutely continuous there, $\rho(t^\ast) \ge \rho(t_r) = r$, which contradicts $\rho(t^\ast) < r$.
 
\emph{Step 2: $h \ge 0$.} The robot may touch the obstacle and leave it many times, even infinitely often, so we argue by contradiction on an interval where $h < 0$, and show that $h$ cannot decrease there, either away from contact or across contact. By Step 1, the times split into the closed contact set $K := \{t : \rho(t) = r\}$ and its complement, a union of disjoint open intervals on which $\rho > r$.
 
Away from contact we use the barrier constraint. On every compact subinterval of an interval of the complement, $x$ stays in a compact subset of $\{\rho > r\}$, on which each $h_\sigma$ is continuously differentiable. So each $h_\sigma \circ x$ is absolutely continuous there, and by \lemref{lem:maxac} so is $h \circ x$, with derivative $\dot h_{\sigma(t)}$, where $\sigma(t)$ is the edge whose constraint the input satisfies. That constraint gives $\tfrac{d}{dt}h \ge -\alpha(h)$ at almost every time of the interval.
 
At contact we use the contact constraints. Define $G(t) := -r\Delta v_\parallel(t)$ for all $t$. The function $G$ is absolutely continuous, it equals $h$ on $K$ by \eqref{eq:hcontact}, and at almost every $t \in K$ the input satisfies the contact constraints, so $\tfrac{d}{dt}\Delta v_\parallel \le 0$ and $\dot G \ge 0$.
 
Now suppose $h(t_1) < 0$ for some $t_1$. Since $h(0) \ge 0$, $t_0 := \sup\{t \le t_1 : h(t) \ge 0\}$ exists, and by continuity $h(t_0) = 0$ and $h < 0$ on $(t_0, t_1]$. We show that $h$ is nondecreasing on $[t_0, t_1]$, which contradicts $h(t_1) < 0$. On the part in $(t_0, t_1]$ of each interval of the complement of $K$, $h < 0$, so $\alpha(h) < 0$ and $\tfrac{d}{dt}h > 0$ at almost every time. Hence $h$ is nondecreasing on that part, and by continuity on its closure. For $s < t$ in $K \cap [t_0, t_1]$, the set $[s, t] \setminus K$ is a union of disjoint open intervals $(a, b)$ with $a, b \in K$, and the absolute continuity of $G$, with the countable additivity of the integral, gives
\[
G(t) - G(s) = \int_{K \cap [s,t]} \dot G \, dt' + \sum_{(a,b)} \bigl(G(b) - G(a)\bigr) .
\]
The integral is nonnegative, and each $G(b) - G(a) = h(b) - h(a) \ge 0$, so $h(t) \ge h(s)$. For general $s < t$ in $[t_0, t_1]$, either $[s, t]$ contains no point of $K$, and lies in the closure of one interval of the complement, or $s' := \min(K \cap [s,t])$ and $t' := \max(K \cap [s,t])$ exist, and $h(s) \le h(s') \le h(t') \le h(t)$. So $h$ is nondecreasing on $[t_0, t_1]$.
 
By Steps 1 and 2, $\rho(t) \ge r$ and $h(t) \ge 0$ for all $t \in [0, T]$, which means $x(t) \in \mathcal{C}$. This holds for every $T$, which gives the theorem. \hfill$\blacksquare$
 
\section{Certification of the Gain}
\label{app:certify}
 
The gain $\kappa^\star$ of \appref{app:closedform} is conservative, since \eqref{eq:G} bounds the drift and the authority separately, allowing worst cases that need not occur at the same state. For a given pair $(\kappa, p)$, and for $p > 1$ a range $\rho \le \rho_{\max}$, the following three steps verify \eqref{eq:key} at every state of $\mathcal{C}^\sigma$ with $\rho > r$ in that range.
 
\emph{Step 1: constants.} At $p = 1$, $\bar D = 1$ and $M_1 = 2$. For $p > 1$ we take $\bar D = \rho_{\max}^{\,p-1}$, and write $d^{\,p-1}(p + r^2/\rho^2) = p\,d^{\,p-1} + d^{\,p-1}r^2/(d^2 + r^2)$. The second term is largest at $d = r\sqrt{(p-1)/(3-p)}$ for $1 < p < 3$, which gives $M_p = p\bar D + \tfrac{3-p}{2}\bigl(r\sqrt{(p-1)/(3-p)}\bigr)^{p-1}$.
 
\emph{Step 2: low speeds.} The bounds of \appref{app:bounds} use the speed only through $v \le v_{\max}$, so on the states with $v \le v_c$, for any $v_c \le v_{\max}$, they hold with $v_c$ in place of $v_{\max}$, also in $w_{\max}$. With this replacement, $\nu$ and $\xi$ increase with $v_c$ and $\Lambda$ decreases, so if \eqref{eq:G} holds at some $v_c$, it holds at every smaller one. We find the largest such $v_c$, up to a tolerance, by bisection on $[0, v_{\max}]$, keeping the end of the interval at which \eqref{eq:G} holds. Then \eqref{eq:key} holds at all speeds $v \le v_c$. If \eqref{eq:G} holds at no $v_c > 0$, the pair is not verified.
 
\emph{Step 3: remaining speeds.} For $v_c \le v \le v_{\max}$ we divide the states into boxes, with one interval for each of $r/\rho$, the heading, $v$ and $\sigma\Delta v_\perp$. For $\Delta v_\parallel$ we use the range allowed by $h_\sigma \ge 0$ and $\|\bm{v}_o\| \le v_{o,\max}$. On each box, we compute a lower bound on $\sup_{\bm{u}}\dot h_\sigma$, with $B_\sigma$ as in \eqref{eq:hdot}, by replacing every quantity with the worst value its interval allows, even if these values never occur together at one state. We compute this bound in interval arithmetic and round each result outward, so the bound stays valid in floating point.
 
If the bound on a box is nonnegative, every state in the box satisfies \eqref{eq:key}, and we remove the box from the list. We also remove any box that contains no state of $\mathcal{C}^\sigma$. If the bound is negative, we split the box in two and put both halves back on the list. When the list is empty, every state of the domain lies in a removed box, so \eqref{eq:key} holds on the whole domain, and the pair is verified. If instead a box reaches a minimum width with a negative bound, or the list exceeds a size limit, the pair is not verified. A smaller $v_c$ would only enlarge this domain.

\end{document}